\def\tr#1{\lfloor #1\rfloor}
\def\ve{\varepsilon}
\def\cl#1{\lceil #1\rceil}
\def\lpa#1{\bigl({#1}\bigr)}
\def\Lpa#1{\Bigl({#1}\Bigr)}
\def\dd#1{{\,\rm d}#1}
\newtheorem{theorem}{Theorem}
\newtheorem{corollary}[theorem]{Corollary}
\newtheorem{definition}[theorem]{Definition}
\newtheorem{lemma}[theorem]{Lemma}
\newcommand{\filt}{\mathcal{F}_t}
\newcommand{\filtzero}{\mathcal{F}_0}
\newcommand{\R}{{\mathds{R}}}
\newcommand{\indic}[1]{\mathds{1}_{#1}}
\renewenvironment{proof}%
{\begin{trivlist}\item\textbf{Proof.}}%
{\hspace*{\fill}$\Box$\end{trivlist}}
\newenvironment{proofof}[1]%
{\begin{trivlist}\item\textbf{Proof of #1.}}%
{\hspace*{\fill}$\Box$\end{trivlist}}
\newcommand{\expect}[1]{\mathord{E}\hspace{0.3ex}\mathord{\left(#1\right)}}
\newcommand{\prob}[1]{\mathord{\Prob}\mathord{\left(#1\right)}}
\newcommand{\xmin}{x_{\mathrm{min}}}
\newcommand{\OneMax}{\textsc{OneMax}\xspace}
\newcommand{\onemax}{\OneMax}%{\mbox{\sc ONEMAX}}
\newcommand{\om}{\OneMax}%{\mbox{\sc ONEMAX}}
\newcommand{\ie}{i.\,e.\xspace}
\newcommand{\eg}{e.\,g.\xspace}
\renewcommand{\epsilon}{\varepsilon}
\DeclareMathOperator{\Prob}{Pr}
\newcommand{\ea}{(1+1)~EA\xspace}
\newcommand{\oneoneea}{\ea}
\author{\small Hsien-Kuei Hwang\\
\small Institute of Statistical Science\\
\small Academia Sinica\\
\small Taipei 115\\
\small Taiwan
\and 
\small Carsten Witt\\
\small Technical University of Denmark\\
\small Kgs. Lyngby\\
\small Denmark
}
\title{Sharp Bounds on the Runtime of the (1+1) EA via Drift Analysis and
Analytic Combinatorial Tools}
\begin{document}

\maketitle

\begin{abstract}
The expected running time of the classical \ea on the \onemax benchmark function has recently been determined  
by Hwang et al.~(2018) up to additive errors of $O((\log n)/n)$. 
The same approach proposed there also leads to a 
full asymptotic expansion with errors of the form $O(n^{-K}\log n)$ 
for any $K>0$.
This precise result  is obtained  by matched asymptotics with rigorous error analysis 
(or by solving asymptotically the underlying recurrences via inductive approximation arguments), ideas
 radically different from well-established techniques for the running time analysis 
of evolutionary computation such as drift analysis. This paper revisits drift analysis for the \ea on \onemax and obtains 
that the expected running time $\expect{T}$, starting from $\cl{n/2}$
 one-bits, is determined by the sum of inverse drifts up to logarithmic error terms, more precisely 
\[
\sum_{k=1}^{\tr{n/2}}\frac{1}{\Delta(k)} - c_1\log n \le \expect{T} \le \sum_{k=1}^{\tr{n/2}}\frac{1}{\Delta(k)} - c_2\log n
\]
where $\Delta(k)$ is the drift (expected increase of the number of one-bits from the state of~$n-k$ ones) and $c_1,c_2 \, >0$ are 
explicitly computed constants. This improves the previous asymptotic error known for the sum of inverse drifts 
from $\tilde{O}(n^{2/3})$ to a logarithmic error and gives for the first time a non-asymptotic error bound. Using 
standard asymptotic techniques, the difference between $\expect{T}$ and the sum of inverse drifts 
is found to be $(e/2)\log n+O(1)$.
\end{abstract}

\section{Introduction}

The runtime analysis of randomized search heuristics on simple, well-structured benchmark problems
has triggered the development of analytical tools for understanding the complexity and considerably contributed 
to their theoretical foundations. This paper is concerned with the objective function
 $\onemax(x_1,\dots,x_n)=x_1+\dots+x_n$, 
the arguably most fundamental 
theoretical benchmark problem in discrete search spaces and the \ea, probably the most fundamental search heuristic in the 
theoretical runtime analysis (see Algorithm~\ref{alg:oneoneea}).

Already the earliest analysis of the \ea \cite{Muhlenbein92} showed that the \ea optimizes 
\onemax in an expected time of $O(n\log n)$, where time corresponds to the number of
 iterations. The early interest and attempts in obtaining more precise description of the runtime complexity were summarized in Garnier et al.'s fine paper 
\cite{Garnier1999} with very strong approximation results claimed. 
On the other hand,
 it follows from the analyses in 
\cite{djwea02} that the 
expected time is bounded from above by $en H_n \le en(\log n+1)$, where $H_n=\sum_{j=1}^n 1/j$ 
denotes the $n$-th harmonic number and $\log n$ the \emph{natural} logarithm. Lower bounds of the kind 
$\Omega(n\log n)$ that hold for the much 
larger class of functions with a unique optimum \cite{djwea02} showed that the 
results were at least asymptotically tight.

From the beginning of this decade, finer analyses of the expected runtimes 
have gained increasing attention. 
Precise expressions for the runtime, dependent not only on the search 
space dimension~$n$ but also on parameters such as 
the mutation rate, are vital to optimize parameter settings \cite{WittCPC13} and to compare different 
algorithms whose runtime only differs in lower-order terms \cite{DoerrDYGECCO16}. 

With respect to \onemax, the first lower bound 
that explicitly states the leading coefficient $e$ in an expression of the type  
${(1-o(1))en\log n}$ was independently derived by Doerr, Fouz and Witt~\cite{DFW10}  and 
Sudholt~\cite{SudholtLowerFitnessLevel} (in the finer form $en\log n-2n\log\log n$) 
using the techniques of drift analysis and 
fitness levels, respectively. The lower-order term was sharpened to a linear term $\Omega(n)$ 
in~\cite{DFW11}, and an explicit bound for the coefficient of this linear term, 
was given by Lehre and Witt~\cite{LehreWittISAAC14}, who proved the lower bound 
$en\log n-7.81791n-O(\log n)$. The main tool to derive these results relies on 
increasingly refined drift theorems, most notably on variable drift analysis. At roughly 
the same time, Hwang et al.~\cite{HwangArxiv14} presented a drastically refined analysis, which 
determines the expected runtime of the \ea on \om up to terms of order $O((\log n)/n)$: the exact 
expression given is
\begin{equation}
en \log n - C_1 n + (e/2)\log n + C_2 +O((\log n)/n),
\label{eq:exact-hwang}
\end{equation}
where $C_1=1.89254\dots$ and $C_2=0.59789875\dots$ are explicitly computable constants; see also \cite{HwangEVCO18} 
for the journal version 
and \url{http://140.109.74.92/hk/?p=840}
for the web version with a full asymptotic expansion.
 To obtain these precise results, techniques fundamentally different from 
drift analysis and other established methods for the runtime analysis were used, namely matched asymptotics 
with rigorous error analysis. 
In addition to the expected runtime, the asymptotic variance as well as the limiting distribution 
are also worked out there by similar approaches.

While the expression for the asymptotic 
expected runtime in~\eqref{eq:exact-hwang} represents the best of its kind, 
it also raises important open questions.
First, from a more didactical and methodological point of view, 
one may look for a more elementary derivation of the formula \eqref{eq:exact-hwang}, at least 
with respect to the linear term $-C_1 n$. 
Note that one  
 can analyze the related search heuristic RLS, which 
flips exactly one bit per iteration, on \om 
exactly and without any asymptotic terms (see \cite{DoerrDoerrAlgo16}), at least 
if it is initialized deterministically with $\cl{n/2}$ one-bits. The 
expression of the expected runtime equals $nH_{\tr{n/2}}$ then and is accompanied by 
an intuitive proof appealing to the coupon collector theorem. For a uniform initialization, 
the analysis become more involved but still an extremely precise result (coming with 
an asymptotic term, though) exists: $nH_{\tr{n/2}}-1/2+o(1)$.
 This proof takes only a few pages 
and uses well-known intuitive concepts such as the binomial distribution. The $o(1)$-term 
comes without an explicit error bound, though, and it is not discussed how to refine it.

Second, it would be helpful to confirm that the constant in the $O((\log n)/n)$-term 
is small so that one may call it negligible even for small problem sizes. 
This question may be approached along two different directions: one via an explicit error bound for all~$n$, and the other by combining exact numerical calculations and asymptotic expansions. The former will be realized by the drift analysis presented 
in Sections~\ref{sec:lower}--\ref{sec:upper} of 
this paper; we briefly describe here the latter, which depends on the sample size $n$. If $n$ is large enough, say $n\ge50$, then 
we can use a longer expansion of the form 
\[
    en \log n - C_1 n + \sum_{0\le k\le K}
    \frac{d_k \log n + e_k}{n^k} 
\]
where $K$ is chosen large enough depending on the required tolerance error. In particular, by refining the analysis 
in~\cite{HwangEVCO18},  one has $d_0=\frac12$, $d_1=\frac98$ and $d_2=\frac{31}{16}$ (expressions for $e_k$ being more complex). 
On the other hand, if $n$ is small, one can always compute the exact quantity by the underlying recurrence 
relations without introducing any error. Such an exact calculation can be made efficient even in portable computing devices such as laptops and for $n$ in the hundreds; it is equally helpful in measuring the error introduced when using $K$ terms of the asymptotic expansion.

While different approaches have their own strengths and weaknesses, it is possible to combine them in many cases in discrete probabilities and algorithmics, and obtain results that are often stronger than a single approach can achieve. The fine approximation we work out in this paper represents another testimony to this statement.

\paragraph{Our contribution.} In this paper, we revisit the method of drift analysis 
and obtain 
that the expected runtime $\expect{T}$ of the \ea on \om, started from $\cl{n/2}$ ones,  
is approximated by the sum of inverse drifts up to logarithmic error terms, more precisely
\[
\sum_{k=1}^{\tr{n/2}}\frac{1}{\Delta(k)} - c_1\log n \le \expect{T} \le \sum_{k=1}^{\tr{n/2}}\frac{1}{\Delta(k)} - c_2\log n,
\]
where $\Delta(k)$ is the drift (expected increase of the number of one-bits from the state of~$n-k$ ones) and $c_1,c_2\,>0$ are 
explicitly computed constants. This gives not only an intuitive approximation of the expected runtime 
via inverse drifts   
but for the first time explicit error bounds. Closest to our results, Gie{\ss}en and Witt~\cite{GiessenWittAlgo18} used 
new variants of variable drift analysis and 
showed for the more general class of (1+$\lambda$)~EAs that the expected runtime is characterized 
by the sum of inverse drifts up to an additive error of $\tilde{O}(n^{2/3})$ --- we improve further this error 
term to $c\log n$ for an explicit constant~$c>0$. To prove our results, we use elementary techniques 
and additive drift analysis as the only tool for the treatment of stochastic processes. At the same time, 
we obtain new drift theorems dealing with error bounds in variable drift analysis that may be of independent interest.
The assumption of a fixed starting point for the \ea 
only introduces a difference in $O(1)$ compared to the expected 
runtime with a uniform initialization \cite{DoerrDoerrAlgo16}.

Finally, from the sum expression of $\Delta(k)$, we prove, by standard asymptotic methods 
(generating functions and the Euler-Maclaurin formula), that
the expected runtime of the \ea on \om equals
\[
\sum_{k=1}^{\tr{n/2}} \frac{1}{\Delta(k)} - \frac{e}{2}\log n + O(1),
\]
\ie, the sum of inverse drifts overestimates the exact expected time only by an additive term 
of $(e/2)\log n+O(1)$.

This paper is structured as follows. In Section~\ref{sec:preliminaries}, we introduce the 
concrete problem setting and well-known variable drift theorems. We also revisit the 
well-known result that the expected runtime of the \ea on \om is bounded by the sum 
of inverse drifts over the interval $\{1,\dots,X_0\}$, where $X_0$ is the initial number 
of zero-bits. Section~\ref{sec:lower} is concerned with the lower bound $\sum_{k=1}^{\tr{n/2}} 1/\Delta(k) - c_1 \log n$ 
for a constant $c_1>0$, which we prove using a new, self-contained variable drift theorem. Section~\ref{sec:upper} 
complements this result by bounding the expected runtime from above by $\sum_{k=1}^{\tr{n/2}} 1/\Delta(k) - c_2 \log n$, for 
another constant~$c_2>0$, again 
using a novel variable drift theorem. 
The following Section~\ref{sec:exact} then briefly illustrates that drift analysis in principle
allows an alternative proof of  an exact expression of the expected runtime, before we in 
Section~\ref{sec:asymptotic-expansion} 
apply asymptotic techniques to show that the expression 
$\sum_{k=1}^{\tr{n/2}} 1/\Delta(k) - (e/2) \log n$  gives the exact time 
up to additive errors of $O(1)$. % \CBOX{We finish with some conclusions. That section seems another repeated Abstract; what are the conclusions?}

\section{Preliminaries}
\label{sec:preliminaries}
We consider the classical randomized search heuristic \ea; see   
Algorithm~\ref{alg:oneoneea}, which is intensively 
studied in the theory of randomized search heuristics \cite{AugerDoerrBook,JansenBook}. It creates 
a new search point by flipping each bit of the current search point independently with probability~$1/n$ and 
accepts it if it is not inferior to the previous search point. The algorithm 
is formulated for pseudo-boolean maximization problems but can straightforwardly be applied to 
minimization as well. The analysis of the \ea is a stepping stone 
towards the analysis of more advanced search heuristics, but already this simple framework leads to 
challenging analyses even on very simple problems.  In this paper, we focus exclusively on the simple 
\onemax problem, which can be regarded as a simple hillclimbing task. 

\begin{algorithm}
\caption{\oneoneea}
\label{alg:oneoneea}
\begin{algorithmic}
\STATE $t:=0$.
 \STATE Choose uniformly at random $x_0 \in \{0,1\}^n$.
 \REPEAT 
  \STATE Create $x'$ by flipping each bit in $x_t$ independently with probability $1/n$.
  \STATE $x_{t+1}:=x'$ if $f(x') \ge f(x_t)$, and $x_{t+1}:=x_t$ otherwise. 
  \STATE $t:=t+1$.
 \UNTIL{some stopping criterion is fulfilled.}
\end{algorithmic}
\end{algorithm}

Since the 
\ea is unbiased, \ie, it treats one-bits and zero-bits in the same way \cite{LehreWittAlgorithmica12}, 
all  results in this paper
hold also for the more general Hamming distance minimization 
problem $f_z(x)=n-H(x,z)$, where $z\in\{0,1\}^n$ is arbitrary and 
$H(x,z)$ denotes the Hamming distance of the search points~$x$ and~$z$. We also remark that our forthcoming 
analyses can be generalized to different mutation rates, \ie, a \oneoneea that flips each bit independently 
with probability~$c/n$ for a constant~$c>0$; however, this will not yield new interesting insights. We emphasize 
that we only consider a static mutation probability here -- dynamic schemes, including 
self-adjusting and self-adaptive mutation rates (\eg, \cite{DoerrDYGECCO16,DoerrGWYAlgo19}) must usually be analyzed via 
different techniques.

The \emph{runtime} (synonymously, \emph{optimization time}) is the smallest $t$ such that~$x_t$ is optimal, 
\ie, the random number of iterations 
until sampling an optimum. It corresponds to the number of fitness evaluations (plus $1$ for the initialization) 
until the optimum is found. In this paper, we are exclusively concerned with the expected runtime;  bounds 
on the tail of the runtime of \ea can be found, \eg, in~\cite{LehreWittISAAC14}.

\subsection{Additive Drift}
Our main tool for the runtime analysis of the \ea is \emph{drift analysis}, which is in fact one of the most 
versatile and wide-spread techniques for this purpose \cite{LenglerDriftSurveyArxiv}. Roughly speaking, 
drift analysis translates information about the expected local change of the process (the so-called drift) into 
a global statement about the first hitting time of a target state. Drift analysis, which is well known 
in the theory of stochastic processes \cite{Hajek1982}, was introduced to the 
field of runtime analysis of evolutionary computation by He and Yao~\cite{heyao2001} in the form of an additive drift theorem. 
This theorem was continuously refined and given in different formulations. We present it in a very general 
style, allowing continuous state spaces and non-Markovian processes. As noticed by Lengler~\cite{LenglerDriftSurveyArxiv} and 
Krejca and K{\"o}tzing~\cite{KoetzingKrejcaPPSN2018-drift}, the process may live on a one-sided unbounded state space if 
upper bounds on the expected first hitting time are to be derived. We also integrate both 
variants for upper and lower bounds on expected hitting times in one theorem, sacrificing some generality in 
the second case \cite{KoetzingKrejcaPPSN2018-drift}.

\begin{theorem}
\label{theo:additive-drift}
Let $(X_t)_{t\ge 0}$ be a stochastic process, adapted to a filtration
$\filt$, over some state space $S\subseteq \R^{\ge 0}$, where $0\in S$. 
Let $T:=\min\{t\mid X_t=0\}$ be the 
first hitting time of state~$0$. 

\begin{enumerate}
\item 
If there is some $\delta>0$ such that conditioned on $t<T$ it holds 
that
\[
\expect{X_t-X_{t+1}\mid \filt} \ge \delta,
\]
then
\[
\expect{T\mid \filtzero} \le \frac{X_0}{\delta}.
\]
\item 
If there is some $\delta>0$ such that conditioned on $t<T$ it holds 
that both
\[
\expect{X_t-X_{t+1}\mid \filt} \le \delta,
\]
and $X_t\le b$ for some constant $b> 0$ 
then
\[
\expect{T\mid \filtzero} \ge \frac{X_0}{\delta}.
\]
\end{enumerate}
\end{theorem}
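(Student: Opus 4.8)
The plan is to prove both parts through a single auxiliary process that folds the deterministic drift penalty into the stopped walk. Concretely, I would set
\[
Y_t := X_{\min\{t,T\}} + \delta\min\{t,T\},
\]
and show that the two drift hypotheses make $(Y_t)_{t\ge0}$ a supermartingale in case~1 and a submartingale in case~2. The verification splits according to whether the process has already stopped: on the event $\{t\ge T\}$ one has $\min\{t,T\}=\min\{t+1,T\}=T$, so $Y_{t+1}=Y_t$ deterministically; on the event $\{t<T\}$ one has $\min\{t,T\}=t$ and $\min\{t+1,T\}=t+1$, whence $\expect{Y_{t+1}\mid\filt}=\expect{X_{t+1}\mid\filt}+\delta(t+1)$, and substituting the respective one-step drift bound---valid precisely because we conditioned on $t<T$---gives $\expect{Y_{t+1}\mid\filt}\le Y_t$ in case~1 and $\expect{Y_{t+1}\mid\filt}\ge Y_t$ in case~2.

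For the upper bound (case~1), the supermartingale property yields $\expect{Y_t\mid\filtzero}\le Y_0=X_0$ for every~$t$. Since $S\subseteq\R^{\ge0}$ forces $X_{\min\{t,T\}}\ge0$, I would drop this nonnegative term to obtain $\delta\,\expect{\min\{t,T\}\mid\filtzero}\le X_0$, i.e.\ $\expect{\min\{t,T\}\mid\filtzero}\le X_0/\delta$ uniformly in~$t$. Because $\min\{t,T\}$ increases to $T$ as $t\to\infty$, monotone convergence then delivers $\expect{T\mid\filtzero}\le X_0/\delta$. This direction needs no boundedness assumption, matching the remark that the state space may here be one-sided unbounded.

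For the lower bound (case~2), the submartingale property gives $\expect{Y_t\mid\filtzero}\ge X_0$, that is
\[
\expect{X_{\min\{t,T\}}\mid\filtzero}+\delta\,\expect{\min\{t,T\}\mid\filtzero}\ge X_0 .
\]
Now the boundary term $\expect{X_{\min\{t,T\}}\mid\filtzero}$ cannot simply be discarded, and this is exactly where the hypothesis $X_t\le b$ enters. I would first dispose of the case $\prob{T=\infty}>0$, in which $\expect{T\mid\filtzero}=\infty$ makes the claim trivial, so that one may assume $T<\infty$ almost surely; then $X_{\min\{t,T\}}\to X_T=0$ almost surely, and the uniform bound $0\le X_{\min\{t,T\}}\le b$ lets me invoke bounded convergence to conclude $\expect{X_{\min\{t,T\}}\mid\filtzero}\to0$. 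Letting $t\to\infty$, with monotone convergence again on the time term, leaves $\delta\,\expect{T\mid\filtzero}\ge X_0$, as desired.

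The conceptual content lies entirely in the two-line super-/submartingale check; the main care is in the passage to the limit $t\to\infty$. The most error-prone step is the boundary term in case~2: one must justify that $\expect{X_{\min\{t,T\}}\mid\filtzero}$ vanishes, which is why the boundedness $X_t\le b$ is imposed in this direction but not in case~1, and why the event $\{T=\infty\}$ has to be separated out first. Once these interchanges of limit and expectation are justified, both inequalities follow immediately.
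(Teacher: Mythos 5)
Your proof is correct. Note, however, that the paper itself does not prove Theorem~\ref{theo:additive-drift} at all: it states the additive drift theorem as a known result and cites He and Yao, Lengler, and K{\"o}tzing--Krejca for the various formulations, so there is no in-paper argument to compare against. Your route---folding the drift into the stopped process $Y_t = X_{\min\{t,T\}} + \delta\min\{t,T\}$, verifying the super-/submartingale property by splitting on $\{t<T\}$ versus $\{t\ge T\}$, and then passing to the limit with monotone convergence (dropping the nonnegative boundary term in part~1, killing it via $X_t\le b$ and bounded convergence in part~2, after separating out $\prob{T=\infty}>0$)---is exactly the canonical argument in those references, and you correctly identify why the boundedness hypothesis is needed only for the lower bound. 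The only point you leave implicit is the integrability of $X_t$ needed for the (super/sub)martingale property to be meaningful; this is presupposed by the hypothesis that $\expect{X_t - X_{t+1}\mid\filt}$ exists and is routinely glossed over in the literature, so it is not a substantive gap.
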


In a nutshell, Theorem~\ref{theo:additive-drift} estimates the first hitting time of the target~$0$ 
by the initial distance divided by the average process towards the target. Clearly, 
if the worst-possible $\delta$ over the state space is very small, then the resulting 
bound on the expected hitting time (in part~1) may overestimate the truth considerably. To obtain 
more precise results, one may transform the actual state space $X_t$ to a new state space 
$g(X_t)$ via a so-called potential (Lyapunov) function $g\colon S\to \R^{\ge 0}$. If the 
drift of the process $g(X_t)$ is similar all over the search space then more precise bounds are obtained. 
This idea of smoothing out the drift over the state space underlies most advanced drift theorems 
such as multiplicative drift \cite{DJWMultiplicativeAlgorithmica} and variable 
drift \cite{Johannsen10}. Since multiplicative drift is a special case of variable drift, we will 
focus exclusively on additive and variable drift in the remainder of this paper.

\subsection{Variable Drift}
The first theorems stating upper bounds on the hitting time
using variable drift go back to \cite{Johannsen10} and \cite{MitavskiyVariable}. These theorems  were subsequently
generalized in~\cite{RoweSudholtChoiceJournal} and~\cite{LehreWittISAAC14}. Similarly to Theorem~\ref{theo:additive-drift}, 
we present a general version allowing non-Markovian processes and unbounded state spaces. We also give 
a self-contained proof.

\begin{theorem}[Variable Drift, upper bound]
\label{theo:variable-upper}
Let $(X_t)_{t\ge 0}$ be a stochastic process, adapted to a filtration
$\filt$, over some state space $S\subseteq \{0\}\cup \R^{\ge \xmin}$, where $\xmin>0$. 
Assume $0\in S$ and define $T:=\min\{t\mid X_t=0\}$. 

Let  $h\colon \R^{\ge \xmin}\to\R^+$ be a monotone increasing function 
and suppose that 
$E(X_t-X_{t+1} \mid \filt) \ge h(X_t)$ conditioned on $t<T$.
 Then it holds 
 that
\[
\expect{T\mid \filtzero} \le
\frac{\xmin}{h(\xmin)} + \int_{\xmin}^{X_0} \frac{1}{h(x)} \,\mathrm{d}x\enspace.
\]
\end{theorem}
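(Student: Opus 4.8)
The plan is to prove the variable drift upper bound (Theorem~\ref{theo:variable-upper}) by constructing an explicit potential function $g$ that transforms the process into one with additive drift at least $1$, and then invoke Theorem~\ref{theo:additive-drift}, part~1. The natural candidate, motivated by the integral appearing in the statement, is
\[
g(x) = \frac{\xmin}{h(\xmin)} + \int_{\xmin}^{x} \frac{1}{h(y)}\,\mathrm{d}y \quad\text{for } x \ge \xmin, \qquad g(0) = 0.
\]
The target value of the theorem is exactly $g(X_0)$, so if I can show that $(g(X_t))_{t\ge0}$ has drift at least $1$ toward $0$, then Theorem~\ref{theo:additive-drift} immediately gives $\expect{T\mid\filtzero} \le g(X_0)/1 = g(X_0)$, which is the claimed bound. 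First I would note that $g$ is monotone increasing (since $1/h > 0$) and that $T$ is also the first hitting time of $0$ for the transformed process, because $g(x)=0$ iff $x=0$ on the relevant state space.

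\textbf{The key step} is to verify $\expect{g(X_t) - g(X_{t+1}) \mid \filt} \ge 1$ conditioned on $t < T$. Fix a state $X_t = s \ge \xmin$ (on the event $t<T$). Because $g$ is increasing and concave-like in the sense that $g'=1/h$ is decreasing (as $h$ is monotone increasing), I would exploit that for any realized next state $X_{t+1}=s'$,
\[
g(s) - g(s') = \int_{s'}^{s} \frac{1}{h(y)}\,\mathrm{d}y \ge \frac{s - s'}{h(s)}
\]
when $s' \le s$, since on the interval of integration $h(y) \le h(s)$ makes $1/h(y) \ge 1/h(s)$. The subtle point is that $X_{t+1}$ might exceed $s$, so the one-sided bound must be handled with care: I would argue that because $h(y)\le h(s)$ fails for $y>s$, the inequality $g(s)-g(s') \ge (s-s')/h(s)$ must be shown to hold for \emph{all} $s' \in S$, including $s' > s$ and the special case $s'=0$. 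For $s'>s$ the left side is negative but the chord bound $g(s)-g(s')\ge (s-s')/h(s)$ still holds because $1/h(y)\le 1/h(s)$ there, so integrating gives a negative quantity bounded below correctly; for $s'=0$ one checks $g(s)-g(0)=g(s)\ge s/h(s)$ directly from monotonicity of $1/h$. Having established the pointwise inequality $g(s)-g(s') \ge (s-s')/h(s)$ for every $s' \in S$, I take conditional expectation and use the drift hypothesis $\expect{X_t - X_{t+1}\mid\filt}\ge h(s)$ to conclude
\[
\expect{g(X_t) - g(X_{t+1}) \mid \filt} \ge \frac{\expect{X_t - X_{t+1}\mid\filt}}{h(s)} \ge \frac{h(s)}{h(s)} = 1.
\]

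\textbf{The main obstacle} I anticipate is the rigorous treatment of the chord inequality when the process can move upward ($s' > s$) and its interplay with the lower bound on $h$: one must ensure the integral estimate degrades gracefully rather than reversing sign in a way that breaks the expectation bound. Concretely, the inequality $g(s)-g(s')\ge (s-s')/h(s)$ needs the fact that $g$ lies below its tangent line of slope $1/h(s)$ at the point $s$ (a consequence of $g'=1/h$ being nonincreasing), which holds globally; I would state this as the geometric heart of the argument. Once the pointwise bound is secured uniformly over $s'\in S$, the reduction to additive drift is immediate and the rest follows mechanically from Theorem~\ref{theo:additive-drift}. A final bookkeeping remark is that $g$ maps into $\R^{\ge 0}$ with $g(0)=0$, so the hypotheses of the additive drift theorem are met, completing the proof.
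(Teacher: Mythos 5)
Your proof is correct. You use the same potential function $g$ and the same reduction to Theorem~\ref{theo:additive-drift}, part~1, but the key step differs: the paper bounds $\expect{g(X_{t+1})\mid\filt}$ via Jensen's inequality, exploiting concavity of $g$ to pass the conditional expectation inside $g$, and then integrates $1/h$ over $[X_t-h(X_t),X_t]$. You instead establish the pointwise tangent-line inequality $g(s)-g(s')\ge (s-s')/h(s)$ for \emph{every} possible successor $s'$ (including $s'>s$ and $s'=0$) and only then take expectations. This is precisely the alternative the paper mentions in the remark immediately following its proof (attributed to Lehre and Witt), where the case $X_{t+1}>X_t$ is handled by the sign change of $1/h(z)-1/h(X_t)$; the paper opts for Jensen on readability grounds. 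Your route buys something concrete, though: it sidesteps the question of how $g$ and Jensen's inequality behave when $X_{t+1}$ can land on the isolated state $0$ (where $h$ is not defined), since you verify $g(s)-g(0)\ge s/h(s)$ directly, whereas the Jensen argument implicitly needs $g$ extended concavely across the gap $(0,\xmin)$. The paper's version is shorter; yours is more self-contained on that boundary case. Both are valid proofs of the stated bound.
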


\begin{proof}
%The proof is inspired by \cite{LehreWittISAAC14}.  
We will apply Theorem~\ref{theo:additive-drift} (part~1) with respect to the process 
$g(X_t)$, where the 
potential function $g(x)$ be defined by 
\[
g(x) \coloneqq \frac{\xmin}{h(\xmin)} + \int_{\xmin}^x \frac{1}{h(z)} \,\mathrm{d}z.
\]

We note that $g$ is concave since $1/h$ is monotone decreasing by assumption.
Considering  
the drift of $g$, we have
\[
\expect{g(X_t)-g(X_{t+1})\mid \filt} = 
 \int_{\xmin}^{X_t} \frac{1}{h(z)} \,\mathrm{d}z 
 - \expect{\int_{\xmin}^{X_{t+1}} \frac{1}{h(z)} \,\mathrm{d}z\mid \filt}.
\]
By Jensen's inequality, we obtain for $t<T$ 
\[
\expect{g(X_t)-g(X_{t+1})\mid \filt} = 
 \int_{\xmin}^{X_t} \frac{1}{h(z)} \,\mathrm{d}z 
 - \int_{\xmin}^{\expect{X_{t+1}\mid \filt}} \frac{1}{h(z)} \,\mathrm{d}z,
\]
which, since $\expect{X_{t+1}\mid \filt} \le X_t-h(X_t)$, 
is at least 
\[
%\int_{\xmin}^{X_t} \frac{1}{h(z)} \,\mathrm{d}z 
  \int_{X_t-h(X_t)}^{X_t} \frac{1}{h(z)} \,\mathrm{d}z \ge 
	 \int_{X_t-h(X_t)}^{X_t} \frac{1}{h(X_t)} \,\mathrm{d}z = 1, 
\]
where the inequality used that $h(z)$ in non-decreasing. 
The theorem now follows by Theorem~\ref{theo:additive-drift}, part~1.  
\end{proof}

We remark that we can avoid applying Jensen's inequality in the above proof by splitting
\begin{align*}
\expect{g(X_{t+1})\mid\filt} & = \expect{g(X_{t+1})\indic{X_{t+1}\le X_t}\mid\filt} \\
& \quad + 
\expect{g(X_{t+1})\indic{X_{t+1}>X_t}\mid\filt}
\end{align*}
and estimating $1/h(z)$ 
from above by $h(X_t)$ if $X_{t+1}>X_t$ by taking a change of sign into account \cite{LehreWitt2013arXivPreprint}. 
However, we find that this leads 
to a less easily readable proof. In any case, the variable drift theorem upper bounds the 
expected time to reach state~$0$ because $h(x)$ is non-decreasing by assumption. If $h(x)$ 
was non-increasing, we could conduct an analogous proof to bound $\expect{T}$ from below; however, 
usually the drift of a process increases with the distance from its target.

For discrete search spaces, the variable drift theorem can be simplified (see also \cite{RoweSudholtChoiceJournal}). We present 
the following version for Markov processes on the integers.
\begin{corollary}
\label{cor:discrete-variable}
Let $(X_t)_{t\ge 0}$ be a Markov process on the state space $\{0,\dots,N\}$ for some integer~$N$.
Let  $\Delta\colon \{1,\dots,N\} \to \R^+$ be a monotone increasing function such that
$E(X_t-X_{t+1} \mid X_t=k) \ge \Delta(k)$.
 Then it holds for the first hitting time
$T:=\min\{t\mid X_t=0\}$ that
\[
\expect{T\mid X_0} \le
\sum_{k=1}^{X_0}  \frac{1}{\Delta(k)}.
\]
\end{corollary}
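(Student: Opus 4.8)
The plan is to derive Corollary~\ref{cor:discrete-variable} as a direct consequence of the continuous variable drift theorem (Theorem~\ref{theo:variable-upper}) by extending the discrete drift function $\Delta$ to a step function on the real half-line. Since the process lives on the integers $\{0,1,\dots,N\}$, the smallest positive state is~$1$, so I would set $\xmin=1$, which places the state space inside $\{0\}\cup\R^{\ge 1}$ exactly as the theorem requires.

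The key choice is the extension $h(x):=\Delta(\lceil x\rceil)$ for $x\in\R^{\ge 1}$. This function is piecewise constant and non-decreasing because $\Delta$ is monotone increasing, so it qualifies as a monotone increasing function in the sense of Theorem~\ref{theo:variable-upper}. Moreover, for any integer state $X_t=k\ge 1$ we have $h(k)=\Delta(\lceil k\rceil)=\Delta(k)$, so the Markovian hypothesis $E(X_t-X_{t+1}\mid X_t=k)\ge\Delta(k)$ immediately yields $\expect{X_t-X_{t+1}\mid \filt}\ge h(X_t)$ on the event $\{t<T\}$, which is precisely the condition needed to invoke the theorem.

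It then remains to evaluate the resulting bound $\frac{\xmin}{h(\xmin)}+\int_{\xmin}^{X_0}\frac{1}{h(x)}\,\mathrm{d}x$. The boundary term is $\frac{1}{h(1)}=\frac{1}{\Delta(1)}$. For the integral, using $\lceil x\rceil=k+1$ on each interval $(k,k+1]$, I would split $\int_{1}^{X_0}\frac{1}{\Delta(\lceil x\rceil)}\,\mathrm{d}x$ into unit pieces, obtaining $\sum_{k=1}^{X_0-1}\frac{1}{\Delta(k+1)}=\sum_{k=2}^{X_0}\frac{1}{\Delta(k)}$; here the fact that $X_0$ is an integer is essential, so that the final interval closes exactly at $X_0$. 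Adding the boundary term $\frac{1}{\Delta(1)}$ reconstitutes the full sum $\sum_{k=1}^{X_0}\frac{1}{\Delta(k)}$, completing the argument.

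The main obstacle, modest as it is, lies entirely in getting this off-by-one alignment correct: choosing the ceiling rather than the floor for the extension, together with $\xmin=1$, is exactly what makes the isolated $\xmin/h(\xmin)$ boundary term supply the missing $k=1$ summand and yield the clean closed form. A floor-based extension would instead double-count the $k=1$ term and misalign the remaining summands, so I would take care to justify this choice explicitly.
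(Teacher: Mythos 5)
Your derivation is correct and follows exactly the route the paper intends: Corollary~\ref{cor:discrete-variable} is stated as a direct discretization of Theorem~\ref{theo:variable-upper}, and instantiating that theorem with $\xmin=1$ and the non-decreasing step function $h(x)=\Delta(\lceil x\rceil)$ is the right way to do it. Your bookkeeping is also accurate — the boundary term $\xmin/h(\xmin)=1/\Delta(1)$ supplies the $k=1$ summand and the integral over $(k,k+1]$ contributes $1/\Delta(k+1)$, reassembling $\sum_{k=1}^{X_0}1/\Delta(k)$ exactly.
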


\subsection{First Upper Bound for \om}
\label{sec:triv-upper}

Corollary~\ref{cor:discrete-variable} is ready to use for our scenario of the analysis of the \ea on \om. 
We identify state~$k$ with all search points having $k$ zero-bits (\ie, $n-k$ one-bits), think of the \ea minimizing the number of zero-bits
 and note that state~$0$ is the optimal 
state. If we instantiate the corollary with
\begin{equation}
\Delta(k) \coloneqq \sum_{\ell=1}^k \sum_{j=0}^{\ell} (\ell-j)\binom{k}{\ell}\binom{n-k}{j} \left(\frac{1}{n}\right)^{\ell+j}\left(1-\frac{1}{n}\right)^{n-\ell-j} ,
\label{eq:delta-i}
\end{equation}
where as usual $\binom{a}{b}=0$ if $b<0$ or $b>a$, 
which is the exact expression for the expected decrease in the number of zero-bits from~$k$ such bits, 
then we obtain an upper bound on the runtime of the \ea on \om, started with $X_0$ zero-bits. This result 
is well known  and it can easily be shown that 
\[
\sum_{k=1}^{X_0}  \frac{1}{\Delta(k)} \le enH_{X_0}
\]
since $\Delta(k)\ge \frac{k}{n}(1-1/n)^{n-1}\ge e^{-1}k/n$ by 
considering all steps flipping exactly one bit out of the~$k$ zeros and no other bits. However, no exact closed-form 
expression for $\sum_{k=1}^{X_0}  \frac{1}{\Delta(k)} $ is known in general.

%
%
%\begin{theorem} [Multiplicative Drift] 
%\label{theo:multdrift-upper}
%Let $(X_t)_{t\ge 0}$, be a stochastic process, adapted to a filtration~$\mathcal{F}_t$, 
%over some state space $S\subseteq \{0\}\cup [\xmin,\xmax]$, where 
%$\xmin>0$. Suppose that there exist a $\delta>0$ such that
%for all $t\ge 0$
%\[
%\E{X_t-X_{t+1}\mid \mathcal{F}_t}\ge \delta X_t.
%\]
%\parbox{\textwidth}{%
%Then it holds for the first hitting time 
%$T:=\min\{t\mid X_t=0\}$ that f
%\[
%\E{T\mid X_0} \le \frac{\log(X_0/\xmin)+1}{\delta}.
%\]
%Moreover, 
%$\Prob(T> (\log(X_0/\xmin)+r)/\delta)) \le e^{-r} $ for any $r>0$.}
%\end{theorem}

\section{Lower Bounds}%Variable Drift for Upper Bounds}
\label{sec:lower}

\subsection{Variable Drift with Error Bound}

In light of the simple upper bound presented above in Section~\ref{sec:triv-upper}, it is interesting 
to study how tight this bound is. Previous research addressed this question usually by
\begin{itemize}
\item Proving an analytical upper bound on the expected value of $Q_k\coloneqq \sum_{j=1}^{k}  \frac{1}{\Delta(j)}$ 
(for a random starting state~$k$)
\item  Bounding $\expect{T}$ from below by using specific variable drift theorems for lower bounds. 
The sum $Q_k$ did not explicitly show up in these bounds.
\end{itemize}
As a result, this approach estimates the error made by bounding $\expect{T\mid X_0=k}\le Q_k$ only indirectly. 
One notable example is the work by Gie{\ss}en and Witt~\cite{GiessenWittAlgo18}, who prove the nesting 
\[
(1-O(n^{-1/3}\log n)) Q_k \le \expect{T\mid X_0=k} \le Q_k,
\]
which shows that the sum of inverse drifts $Q_k$ represents the expected optimization time of (1+$\lambda$)~EAs 
on \om from state~$k$ 
up to polynomial lower order terms (which would be in the order of $O(n^{2/3}\log^2 n)$ for those starting 
points from which it takes expected time $\Omega(n\log n)$). Interestingly, this result 
was obtained by a new variable drift theorem for lower bounds that can be instantiated with the concrete 
setting of optimizing \om. In this setting, one can identify the sum of inverse drifts $Q_k$ up to lower order terms.

In this section, we follow an even more direct approach to relate $\expect{T\mid X_0=k}$ to~$Q_k$. As already mentioned, 
several variants of variable drift theorems for proving lower bounds on hitting times have been proposed; see again 
\cite{GiessenWittAlgo18} 
for a recent discussion.
The main challenge proving such lower bounds is that the potential function~$g(x)$ proposed in the 
proof of Theorem~\ref{theo:variable-upper} is concave, so Jensen's inequality cannot be used 
to bound the drift of the potential function from above. However, if one can estimate the exact 
drift of the potential function and bound it uniformly from below for all non-optimal states, 
we get a lower bound for the expected first hitting time. We make this 
explicit for discrete search spaces in the following; however, the approach would easily generalize 
to continuous spaces. We restrict ourselves to non-increasing processes for notational convenience 
but note that we could allow $X_{t+1}$ to be greater than~$X_t$ by adjusting the definition of $\eta(k)$ in 
the following theorem slightly.

\begin{theorem}
\label{theo:var-lower-error}[Variable drift, lower bound, with error bound]
Let $(X_t)_{t\ge 0}$ be a non-increasing Markov process on the state space $\{0,\dots,N\}$ for some integer~$N$.
Let  $\Delta\colon \{1,\dots,N\} \to \R^+$ be a function satisfying
$E(X_t-X_{t+1} \mid X_t=k) \le \Delta(k)$ for $k\in\{1,\dots,N\}$. Let 
\[
\eta(k) \coloneqq  \expect{\sum_{j=X_{t+1}+1}^{k} \frac{1}{\Delta(j)}  \bigm| X_t=k}  
\]
and
\[
\eta^* \coloneqq \max_{k=1,\dots,N} \eta(k).
\]
 Then it holds for the first hitting time
$T:=\min\{t\mid X_t=0\}$ that
\[
\expect{T\mid X_0} \ge
\sum_{k=1}^{X_0}  \frac{1}{ \eta^*\Delta(k)} .
\]
\end{theorem}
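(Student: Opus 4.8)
The plan is to mirror the discrete upper-bound argument of Corollary~\ref{cor:discrete-variable}, but to feed the resulting potential into part~2 (the lower-bound part) of the additive drift theorem instead of part~1. Concretely, I would apply Theorem~\ref{theo:additive-drift}, part~2, to the process $g(X_t)$ under the potential function
\[
g(k) \coloneqq \sum_{j=1}^{k} \frac{1}{\Delta(j)},
\]
with $g(0)=0$. This is the discrete analogue of the potential used in the proof of Theorem~\ref{theo:variable-upper}, and its value at the starting state is exactly the sum of inverse drifts $Q_{X_0}=\sum_{k=1}^{X_0}1/\Delta(k)$. First I would record two routine facts about this transformation: since $1/\Delta(j)>0$, the map $g$ is strictly increasing on $\{0,\dots,N\}$, so $g(X_t)=0$ if and only if $X_t=0$ and the first hitting time of~$0$ is unchanged and equals~$T$; and $g(X_t)\le g(N)=\sum_{j=1}^{N}1/\Delta(j)$, a positive constant, so the boundedness hypothesis $X_t\le b$ of part~2 is satisfied with $b=g(N)$.

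The heart of the argument is to compute the drift of the potential \emph{exactly}. Because the process is non-increasing, $X_{t+1}\le k$ whenever $X_t=k$, so the telescoping sum gives
\[
g(k)-g(X_{t+1}) = \sum_{j=1}^{k}\frac{1}{\Delta(j)} - \sum_{j=1}^{X_{t+1}}\frac{1}{\Delta(j)} = \sum_{j=X_{t+1}+1}^{k}\frac{1}{\Delta(j)}.
\]
Taking conditional expectation and invoking the Markov property, the drift of the potential at a non-optimal state~$k$ equals
\[
\expect{g(X_t)-g(X_{t+1})\mid X_t=k} = \eta(k) \le \eta^*,
\]
which is precisely the quantity introduced in the statement. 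Hence the drift of $g(X_t)$ is bounded above by the constant $\delta\coloneqq\eta^*$ on every state $k\in\{1,\dots,N\}$, i.e.\ whenever $t<T$.

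With these pieces in place, Theorem~\ref{theo:additive-drift}, part~2, applied with $\delta=\eta^*$ and $b=g(N)$ yields
\[
\expect{T\mid \filtzero} \ge \frac{g(X_0)}{\eta^*} = \sum_{k=1}^{X_0}\frac{1}{\eta^*\,\Delta(k)},
\]
as claimed. The only genuinely delicate point is the choice of potential: one must recognize that the \emph{same} summed-inverse-drift function $g$ that produces the clean upper bound in Corollary~\ref{cor:discrete-variable} has a drift equal to $\eta(k)$ exactly, so that the inflation factor separating $\expect{T}$ from $Q_{X_0}$ is captured precisely by the uniform bound $\eta^*=\max_k\eta(k)$. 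Note that because $g$ is \emph{linear} in the inverse drifts rather than an integral transform, no Jensen-type convexity step is needed and the obstruction described just before the theorem — that the concave potential of Theorem~\ref{theo:variable-upper} cannot be bounded from above via Jensen's inequality — does not arise; all remaining verifications (monotonicity, boundedness, and equality of hitting times) are elementary for a finite-state non-increasing chain.
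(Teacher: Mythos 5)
Your proof is correct and follows essentially the same route as the paper: the same potential $g(k)=\sum_{j=1}^{k}1/\Delta(j)$, the same exact computation showing its drift at state $k$ equals $\eta(k)\le\eta^*$, and the same application of Theorem~\ref{theo:additive-drift}, part~2. Your additional verifications (boundedness via $b=g(N)$ and preservation of the hitting time) are details the paper leaves implicit, but they do not change the argument.
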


Hence, $\eta(k)$ is an error bound quantifying the relative error 
incurred by using the sum of inverse drifts as an estimate for the expected first hitting time from 
state~$k$, and $\eta^*$ is the worst case of the $\eta(k)$ over all non-target states.

\begin{proof}
We consider the same potential function $g(k)=\sum_{j=1}^k 1/\Delta(j)$ as in the proof of 
Theorem~\ref{theo:variable-upper} and note that 
its drift at point~$k$ equals
\begin{align*}
\expect{g(k)-g(X_{t+1})\mid X_t=k} & 
=  \sum_{j=1}^{k} \frac{1}{\Delta(k)}- \expect{\sum_{j=1}^{X_{t+1}} \frac{1}{\Delta(j)}  \bigm| X_t=k} \\  
& = \eta(k).
\end{align*}
By the additive drift theorem (Theorem~\ref{theo:additive-drift}, part~2) 
with potential function $g(x)$ and upper bound $\eta^*$ on the drift, the theorem follows.
\end{proof}

We will use the previous variable drift theorem to obtain the following lower bound.
\begin{theorem}
\label{theo:lower}
Let $\expect{T}$ denote the expected optimization time of the \ea on \om, started with 
$\cl{n/2}$ one-bits and let $\Delta(k)$ be the drift of the number of zeros as defined in 
Definition~\eqref{eq:delta-i}. Then 
\[
\expect{T} \ge \sum_{k=1}^{\tr{n/2}} \frac{1}{\Delta(k)} - c_1 \log n
\]
for some constant $c_1>0$.
\end{theorem}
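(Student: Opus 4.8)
The plan is to apply the just-proved variable drift theorem for lower bounds (Theorem~\ref{theo:var-lower-error}) to the \ea on \onemax, identifying the process $X_t$ with the number of zero-bits. The theorem immediately yields
\[
\expect{T\mid X_0} \ge \sum_{k=1}^{X_0} \frac{1}{\eta^*\Delta(k)} = \frac{1}{\eta^*}\sum_{k=1}^{X_0}\frac{1}{\Delta(k)},
\]
so the whole game is to bound the worst-case relative error $\eta^*=\max_k \eta(k)$ very tightly, since the loss factor $1/\eta^*$ must be turned into an \emph{additive} error of the form $-c_1\log n$. Starting from $\cl{n/2}$ ones means $X_0=\tr{n/2}$ zeros, which matches the summation range in the statement. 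The key realization is that $\eta(k)-1$ is exactly the expected overshoot of the potential caused by steps that flip more than one relevant bit: if only a single zero-bit flipped in each accepted step, then $X_{t+1}=k-1$ and the inner sum $\sum_{j=X_{t+1}+1}^{k}1/\Delta(j)$ would equal the single term $1/\Delta(k)$, and one checks (via the additive drift normalization) that $\eta(k)$ would then be essentially $1$. Multi-bit flips make $X_{t+1}<k-1$, adding further terms $1/\Delta(k-1)+\cdots$ and pushing $\eta(k)$ above $1$.

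The technical heart is therefore to show $\eta^* \le 1 + O((\log n)/n)$, or more precisely a bound of the shape $\eta^*\le 1+\gamma/n$ for an explicit constant $\gamma$. First I would write $\eta(k)=\sum_{j\le k}\frac{1}{\Delta(j)}\Prob(X_{t+1}<j\mid X_t=k)$ by exchanging the order of summation and expectation, reducing the estimate to the tail probabilities $\Prob(X_{t+1}\le j-1\mid X_t=k)$, i.e.\ the probability that at least $k-j+1$ net zero-bits are removed in one step. These tails decay geometrically because removing $m$ zeros requires flipping at least $m$ specific bits, each with probability $1/n$, so $\Prob(k-X_{t+1}\ge m)\le \binom{k}{m}n^{-m}\le (k/n)^m/m!$. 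Combined with the lower bound $\Delta(j)\ge e^{-1}j/n$ from Section~\ref{sec:triv-upper}, which controls $1/\Delta(j)\le en/j$, the contribution of the $m\ge 2$ terms to $\eta(k)-\frac{1}{\Delta(k)}\Prob(X_{t+1}<k)$ is $O(1/n)$ uniformly in $k$. The dominant $m=1$ term contributes $\frac{1}{\Delta(k)}\Prob(X_{t+1}\le k-1)$, and because $\Prob(X_{t+1}=k)$ (a step making no progress) is bounded away from, and the drift identity $\Delta(k)=\expect{k-X_{t+1}}$ ties the probabilities to $\Delta(k)$, this term evaluates to $1+O(1/n)$. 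Summing the error contributions gives $\eta^*=1+O(1/n)$.

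Finally I would convert the multiplicative statement into the additive one. Writing $Q:=\sum_{k=1}^{\tr{n/2}}1/\Delta(k)$ and $\eta^*\le 1+\gamma/n$, we get
\[
\expect{T}\ge \frac{Q}{1+\gamma/n} \ge Q\Lpa{1-\frac{\gamma}{n}} = Q - \frac{\gamma}{n}\,Q.
\]
Since $Q\le enH_{\tr{n/2}}\le en(\log n+1)$ by the trivial upper bound, the correction $\frac{\gamma}{n}Q$ is at most $e\gamma(\log n+1)=O(\log n)$, which is absorbed into $-c_1\log n$ for a suitable explicit $c_1$. The main obstacle I anticipate is not the geometric tail estimate but pinning down the constant in $\eta^*\le 1+\gamma/n$ cleanly enough: one must handle the "idle" steps (where $X_{t+1}=X_t=k$) carefully, since the probability of no accepted improvement is not negligible and affects how the $m=1$ term relates to $1/\Delta(k)$. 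Getting a clean, uniform-in-$k$ bound there — rather than one that degrades for small $k$ near the optimum, where $\Delta(k)$ itself is smallest — is the delicate point, and it is what ultimately fixes the value of the constant $c_1$.
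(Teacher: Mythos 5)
Your overall architecture is the same as the paper's: apply Theorem~\ref{theo:var-lower-error} with the number of zero-bits as the process, show $\eta^*\le 1+O(1/n)$, and convert the multiplicative loss into an additive one via $\sum_{k\le\tr{n/2}}1/\Delta(k)\le enH_{\tr{n/2}}$. That last conversion step is correct and is exactly what the paper does. The gap lies in the central estimate $\eta^*\le 1+\gamma/n$. In your decomposition $\eta(k)=\sum_{j\le k}\frac{1}{\Delta(j)}\prob{X_{t+1}<j\mid X_t=k}$, neither of the two claims you make holds uniformly in $k$: the $m\ge 2$ contribution is \emph{not} $O(1/n)$, and the $m=1$ term is \emph{not} $1+O(1/n)$. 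Indeed $\prob{k-X_{t+1}\ge 2\mid X_t=k}=\Theta(k^2/n^2)$, so already the single term $\frac{1}{\Delta(k-1)}\prob{k-X_{t+1}\ge 2}=\Theta\bigl(\frac{n}{k}\cdot\frac{k^2}{n^2}\bigr)=\Theta(k/n)$, which is $\Theta(1)$ for $k=\Theta(n)$; symmetrically, the drift identity gives $\frac{\prob{X_{t+1}\le k-1}}{\Delta(k)}=1-\frac{1}{\Delta(k)}\sum_{m\ge2}\prob{k-X_{t+1}\ge m}=1-\Theta(k/n)$. These two $\Theta(k/n)$ errors cancel, but your proposal never exhibits the cancellation; it treats the two pieces separately with bounds that are each off by a constant for $k$ near $\tr{n/2}$.

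To make the argument work one must pair each tail probability with the corresponding term of the normalization $\sum_{m\ge1}\prob{k-X_{t+1}\ge m}/\Delta(k)=1$, arriving at
\[
\eta(k)-1=\sum_{j=1}^{k}\left(\frac{1}{\Delta(j)}-\frac{1}{\Delta(k)}\right)\prob{X_{t+1}<j\mid X_t=k},
\]
and then bound the differences $\frac{1}{\Delta(j)}-\frac{1}{\Delta(k)}$. This is the point your proposal misses: the crude two-sided bound $e^{-1}k/n\le\Delta(k)\le k/n$ from Lemma~\ref{lem:delta-bounds} is useless here, since its upper and lower envelopes differ by a factor of $e$ and hence only give $\frac{1}{\Delta(k-1)}-\frac{1}{\Delta(k)}=O(n/k)$, far too large. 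What is needed is an increment bound $\Delta(k)-\Delta(k-1)\le 2/(n-1)$, which yields $\frac{1}{\Delta(k-\ell)}-\frac{1}{\Delta(k)}=O\bigl(\frac{\ell n}{k(k-\ell)}\bigr)$ (Lemma~\ref{lem:diff-1-over-Delta-k}) and, after weighting by the jump probabilities, the desired $\eta(k)-1=O(1/n)$ (Lemma~\ref{lem:bound-eta-i}). Proving that increment bound is the real technical content of the paper's Section~\ref{sec:bounding-error}; it is done there via a generating-function representation of the normalized drift (Lemma~\ref{lmm} and Lemma~\ref{lem:diff-delta}) and does not follow from the elementary one-bit-flip and tail estimates you invoke. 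So the proposal identifies the right theorem to apply and the right final bookkeeping, but the heart of the proof is missing and the intermediate claims as stated are false for $k=\Theta(n)$.
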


The proof is dealt with in the following subsection. As already 
mentioned in the introduction, the assumption 
of a fixed starting point of $\cl{n/2}$ one-bits (\ie, $\tr{n/2}$ zero-bits) allows us to concentrate on the essentials; if 
a uniform at random starting point was chosen, then the expected time would at most change by 
a constant~\cite{DoerrDoerrAlgo16}.

\subsection{Bounding the Error}
\label{sec:bounding-error}

This subsection is concerned with the proof of Theorem~\ref{theo:lower}. In particular, most effort is spent on
  establishing the claim
\[ \eta^* - 1 \le c/n\]
for some explicit constant~$c>0$, 
\ie, we bound the additive error of the drift of the potential function $g(k)-g(X_{t+1})$, 
where $X_t=k$ is the current state,
 compared to the lower bound~$1=\Delta(k)/\Delta(k)$ 
established for the drift of the potential function
 at $X_t=k$ in Theorem~\ref{theo:variable-upper}. Here the notions of state (number of 
zero-bits), drift $\Delta(k)$ and transition probabilities are taken over from the preceding section.

%\coloneqq \max_{i=0,\dots,N} \eta(i)

Looking back into \eqref{eq:delta-i}, we have already defined the drift (in terms of the number of zero-bits) 
at point~$k$  
and observe that $\Delta(k)$ is monotone increasing in~$k$, which we will use later. Using the notation $p(k,\ell)$ for the transition probability 
from the state of $k$ to~$\ell$ zero-bits, we note that by definition 
\[
\eta(k) = \sum_{\ell=0}^{k-1} p(k,\ell) \sum_{j=\ell+1}^k \frac{1}{\Delta(j)}
\]
and also that 
\[
\sum_{\ell=0}^{k-1} p(k,\ell)  \frac{k-\ell}{\Delta(k)}  = \frac{\Delta(k)}{\Delta(k)} = 1 ,
\]
which is why we pay attention to bounding the terms \begin{equation}
\label{eq:linearize-pot-diff}
p(k,\ell)  \left(\sum_{j=\ell+1}^{k} \frac{1}{\Delta(j)} - \frac{k-\ell}{\Delta(k)}\right)
\end{equation}
with the final aim of showing that
\begin{equation}
\eta(k)-1  =  \sum_{\ell=0}^{k-1} \left(p(k,\ell)  \left(\sum_{j=\ell+1}^{k} \frac{1}{\Delta(j)} - \frac{k-\ell}{\Delta(k)}\right)\right) \le \frac{c}{n}
\label{eq:linearize-pot-full-diff}
\end{equation}
for some sufficiently large constant $c>0$.

We shall define, as in~\cite{HwangEVCO18}, a kind of normalized 
drift that is easier to handle. Here it becomes relevant to 
manipulate the number $n$, so that we write more formally
\begin{align*}
\Delta_n(k) & \coloneqq \Delta(k) \\ 
& = 
\sum_{\ell=1}^k \sum_{j=0}^{\ell} (\ell-j)\binom{k}{\ell}\binom{n-k}{j} \left(\frac{1}{n}\right)^{\ell+j}\left(1-\frac{1}{n}\right)^{n-\ell-j}
\end{align*}

The definition of the normalized drift $\Delta^*$ is then as follows.
\begin{definition}
\label{def:delta-start}
Define, for $k\in\{1,\dots,n+1\}$,
\begin{align*}
    \Delta_n^*(k) 
		& \coloneqq \Delta_{n+1}(k)
    \Bigl(1-\frac1{n+1}\Bigr)^{-n-1} \\
    & = \sum_{\ell=1}^{k} 
    \binom{k}{\ell}\sum_{j=0}^{\ell-1} (\ell-j) 
    \binom{n+1-k}{j} \left(\frac{1}{n}\right)^{j+\ell},
\end{align*}
 Define, for convenience, $\Delta_n^*(0)\coloneqq 0$. 
\end{definition}

From \eqref{eq:linearize-pot-full-diff} we are brought to the task 
of bounding $\frac{1}{\Delta(k)} - \frac{1}{\Delta(k+1)}$, leading 
to Lemma~\ref{lem:diff-1-over-Delta-k} below. To this end, it is crucial 
to bound ${\Delta(k+1)-\Delta(k)}$. While this can 
be achieved in a tedious analysis comparing terms 
in the above-given 
representation of $\Delta(k)$ as a double sum, 
we follow a more elegant approach involving generating functions here.
To this end, 
let $[z^n]f(z)$ denote the coefficient of $z^n$ in the Taylor 
expansion of $f(z)$. 

\begin{lemma} \label{lmm}
For $k\in\{0,\dots,n+1\}$ and $n\ge1$, the relation
\begin{align*}
    \Delta_n^*(k)
    = [z^{-1}]\frac1{(1-z)^2}\Bigl(1+\frac1{nz}\Bigr)^k
	\Bigl(1+\frac zn\Bigr)^{n+1-k}
\end{align*}
holds.
\end{lemma}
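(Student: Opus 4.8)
The plan is to prove the identity by direct coefficient extraction: expand each of the three factors on the right-hand side as a (Laurent) series in $z$, multiply them, and read off the coefficient of $z^{-1}$, verifying that it reproduces the defining double sum for $\Delta_n^*(k)$ in Definition~\ref{def:delta-start}. Since $k$ is a nonnegative integer, the binomial theorem gives the finite Laurent polynomial
\[
\Bigl(1+\frac1{nz}\Bigr)^k = \sum_{\ell=0}^{k}\binom{k}{\ell}\frac{1}{n^\ell}z^{-\ell},
\]
which is the source of the negative powers of $z$ and explains why the coefficient of $z^{-1}$ (rather than of $z^0$ or $z^n$) is the relevant one. The other two factors expand as
\[
\Bigl(1+\frac zn\Bigr)^{n+1-k} = \sum_{j=0}^{n+1-k}\binom{n+1-k}{j}\frac{1}{n^j}z^{j}, \qquad \frac{1}{(1-z)^2} = \sum_{m\ge 0}(m+1)z^m .
\]

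Next I would multiply the three expansions and isolate the coefficient of $z^{-1}$. A product term $z^{m}\cdot z^{-\ell}\cdot z^{j}$ contributes to $z^{-1}$ exactly when $m-\ell+j=-1$, i.e.\ $m=\ell-j-1$; since $m\ge 0$ this forces $j\le\ell-1$ and hence $\ell\ge 1$. The weight supplied by $1/(1-z)^2$ is then $(m+1)=\ell-j$, so the coefficient of $z^{-1}$ equals
\[
\sum_{\ell=1}^{k}\binom{k}{\ell}\frac{1}{n^\ell}\sum_{j=0}^{\ell-1}(\ell-j)\binom{n+1-k}{j}\frac{1}{n^j}
= \sum_{\ell=1}^{k}\binom{k}{\ell}\sum_{j=0}^{\ell-1}(\ell-j)\binom{n+1-k}{j}\frac{1}{n^{\ell+j}},
\]
which is precisely $\Delta_n^*(k)$. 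The boundary case $k=0$ is checked separately: there the product reduces to $(1-z)^{-2}(1+z/n)^{n+1}$, which has only nonnegative powers of $z$, so its coefficient of $z^{-1}$ is $0$, matching the convention $\Delta_n^*(0)\coloneqq0$.

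There is no real obstacle here; the only point requiring care is to justify that the coefficient extraction is a genuine operation on Laurent series and not just formal symbol-pushing. This is painless: $(1+1/(nz))^k$ is a Laurent polynomial and $(1+z/n)^{n+1-k}$ an ordinary polynomial, so their product is again a Laurent polynomial, and multiplying by the power series $\sum_{m\ge0}(m+1)z^m=(1-z)^{-2}$ yields a Laurent series in which every coefficient, in particular that of $z^{-1}$, is a finite sum; hence all rearrangements above are legitimate. The main work is thus bookkeeping of the three summation indices, and the one thing to get exactly right is the constraint $j\le\ell-1$ (equivalently $m\ge0$), which is precisely what trims the inner sum to terminate at $\ell-1$ and makes the outer sum start at $\ell=1$, reproducing Definition~\ref{def:delta-start} verbatim.
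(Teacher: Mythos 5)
Your proof is correct and takes essentially the same route as the paper: both arguments are a Cauchy-product coefficient extraction identifying $[z^{-1}]$ of the triple product with the defining double sum, the paper merely phrasing it as $[z^{k-1}]$ of $(z+\frac1n)^k(1-z)^{-2}(1+\frac zn)^{n+1-k}$ followed by the shift $[z^{k-1}]f(z)=[z^{-1}]z^{-k}f(z)$, which is exactly your direct Laurent-series computation in disguise. Your handling of the constraint $j\le\ell-1$ and of the boundary case $k=0$ is also consistent with the paper.
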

\begin{proof}
Rewrite the sum definition of $\Delta_n^*(k)$ as the Cauchy product 
of three series:
\begin{align*}
    \Delta_n^*(k) 
    &= \sum_{\ell = 1}^k 
    \binom{k}{\ell}n^{-\ell}\sum_{j=0}^{\ell-1} (\ell-j) 
    \cdot \binom{n+1-k}{j} n^{-j}\\
    &= \sum_{\ell=0}^{k-1} 
    \binom{k}{\ell}n^{-k+\ell}\sum_{j=0}^{k-\ell-1} (k-\ell-j) 
    \cdot \binom{n+1-k}{j} n^{-j}\\
    &= \sum_{\substack{h,j,\ell\\ h+j+\ell=k-1}}\binom{k}{\ell}
    n^{-k+\ell}\cdot (h+1) 
    \cdot \binom{n+1-k}{j} n^{-j},
\end{align*}
implying that 
\begin{align*}
    \Delta_n^*(k) = [z^{k-1}]
    \Bigl(z+\frac1{n}\Bigr)^k\frac1{(1-z)^2}
    \Bigl(1+\frac zn\Bigr)^{n+1-k}.
\end{align*}
The lemma then follows from the relation 
\begin{align}\label{coeff-shift}
    [z^{k-1}]f(z) = [z^{-1}]z^{-k}f(z)
    \qquad(k\ge1).
\end{align} 
\end{proof}

We shall prove bounds on the difference $\Delta(k+1)-\Delta(k)$ via 
bounding the corresponding difference of the $\Delta^*$-values.

\begin{lemma} 
\label{lem:diff-delta} For $k\in\{0,\dots,n\}$ it holds that 
\begin{align} \label{df-Delta}
    \frac1n\le \Delta_n^*(k+1)-\Delta_n^*(k)
    \le \frac{2e}{n}. 
\end{align}
and for $k\in\{0,\dots,n-1\}$ that 
\[
\frac{1}{en} \le \Delta_n(k+1)  - \Delta_n(k) \le  \frac{2}{n-1}.
\]
\end{lemma}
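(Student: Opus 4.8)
The plan is to establish the two displayed chains in turn: first I would derive the bounds on $\Delta_n^*(k+1)-\Delta_n^*(k)$ from the generating-function formula of Lemma~\ref{lmm}, and then transfer them to $\Delta_n(k+1)-\Delta_n(k)$ through the normalization in Definition~\ref{def:delta-start}. For the first part, writing $F_k(z)=\frac1{(1-z)^2}\bigl(1+\frac1{nz}\bigr)^{k}\bigl(1+\frac zn\bigr)^{n+1-k}$, the factors common to $F_{k+1}$ and $F_k$ pull out, leaving the bracket $\bigl(1+\frac1{nz}\bigr)-\bigl(1+\frac zn\bigr)=\frac{(1-z)(1+z)}{nz}$. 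This cancels one power of $(1-z)$ in the denominator and gives
\[
\Delta_n^*(k+1)-\Delta_n^*(k)=\frac1n\,[z^{-1}]\,\frac{1+z}{z(1-z)}\Bigl(1+\frac1{nz}\Bigr)^{k}\Bigl(1+\frac zn\Bigr)^{n-k}.
\]
Expanding $\bigl(1+\frac1{nz}\bigr)^{k}=\sum_{a=0}^{k}\binom ka n^{-a}z^{-a}$ then reduces the residue to $\sum_{a=0}^{k}\binom ka n^{-a}T_a$, where $T_a=[z^{a}]\frac{1+z}{1-z}\bigl(1+\frac zn\bigr)^{n-k}$.

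For the lower bound I would note that the $a=0$ summand equals exactly $1$, since it is the constant coefficient of $\frac{1+z}{1-z}\bigl(1+\frac zn\bigr)^{n-k}$, while every remaining summand is a product of non-negative coefficients. Hence the residue is at least $1$ and the difference at least $1/n$. For the upper bound I would use the identity $\frac{1+z}{1-z}=1+2\sum_{j\ge1}z^{j}$ to obtain the estimate $T_a\le 2\sum_{b\ge0}\binom{n-k}b n^{-b}=2\bigl(1+\frac1n\bigr)^{n-k}$, \emph{uniformly in $a$}. Summing this against $\sum_{a=0}^{k}\binom ka n^{-a}=\bigl(1+\frac1n\bigr)^{k}$ collapses the two factors into $\bigl(1+\frac1n\bigr)^{n}\le e$, so the difference is at most $2e/n$. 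This establishes the first chain for $k\in\{0,\dots,n\}$.

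For the second chain I would invoke $\Delta_n(k)=\Delta_{n-1}^*(k)\bigl(1-\frac1n\bigr)^{n}$ (a reindexing of Definition~\ref{def:delta-start}), giving $\Delta_n(k+1)-\Delta_n(k)=\bigl(1-\frac1n\bigr)^{n}\bigl(\Delta_{n-1}^*(k+1)-\Delta_{n-1}^*(k)\bigr)$, and then substitute the bounds just proved with $n$ replaced by $n-1$. The upper bound $\frac{2}{n-1}$ follows from $\bigl(1-\frac1n\bigr)^{n}<e^{-1}$. The lower bound $\frac1{en}$ follows from $\bigl(1+\frac1{n-1}\bigr)^{n-1}<e$, which rearranges to $\frac1{n-1}\bigl(1-\frac1n\bigr)^{n}=\frac{(n-1)^{n-1}}{n^{n}}\ge\frac1{en}$; the index range $k\in\{0,\dots,n-1\}$ is exactly what this reindexing delivers.

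The step I expect to be the main obstacle is obtaining the \emph{sharp} constant $2e$ in the upper bound rather than a lossy $2e^2$. This hinges on estimating $T_a$ by the $a$-independent quantity $2\bigl(1+\frac1n\bigr)^{n-k}$, so that the remaining sum over $a$ telescopes to $\bigl(1+\frac1n\bigr)^{k}$ and the exponents $n-k$ and $k$ recombine into a single factor $\bigl(1+\frac1n\bigr)^{n}\le e$. Any bound on $T_a$ retaining an $a$-dependence would fail to factorize and would inflate the constant, so keeping the estimate uniform is the crucial point.
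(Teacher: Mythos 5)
Your proposal is correct and follows essentially the same route as the paper: both factor the difference of the generating functions from Lemma~\ref{lmm} into $\frac1n[z^{-1}]\frac{1+z}{z(1-z)}\bigl(1+\frac1{nz}\bigr)^{k}\bigl(1+\frac zn\bigr)^{n-k}$, bound the nonnegative coefficients of $\frac{1+z}{1-z}$ by $2$ from above and use the constant term for the lower bound, and then transfer to $\Delta_n$ via $\Delta_n(k)=\Delta_{n-1}^*(k)\bigl(1-\frac1n\bigr)^{n}$. The only (immaterial) difference is that you keep the finite binomial sums and collapse them to $2\bigl(1+\frac1n\bigr)^{n}\le 2e$, whereas the paper majorizes $\binom{k}{\ell}n^{-\ell}$ by $(k/n)^{\ell}/\ell!$ and sums the resulting exponential series to reach the same constant.
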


\begin{proof} 
We prove first~\eqref{df-Delta}. 
From Lemma~\ref{lmm}, we have
\begin{align*}
    \Delta_n^*(k+1)-\Delta_n^*(k)
    &= \frac{1}{n} [z^{-1}]
    \frac1{(1-z)^2}\Bigl(1+\frac1{nz}\Bigr)^{k}
	\Bigl(1+\frac zn\Bigr)^{n-k}
    \Lpa{\frac1z-z}\\
    &= \frac{1}{n}[z^{-1}]
    \frac{1+z}{z(1-z)}\Bigl(1+\frac1{nz}\Bigr)^{k}
	\Bigl(1+\frac zn\Bigr)^{n-k}.
\end{align*}

Thus, by taking the coefficients of the Cauchy product, we obtain
\begin{align}\label{Delta-diff}
    n&\lpa{\Delta_n^*(k+1)-\Delta_n^*(k)} \nonumber\\
    &= \sum_{\ell=0}^k \binom{k}{\ell}n^{-\ell}
    \sum_{j=0}^{n-k}\binom{n-k}{j}n^{-j}
    [z^{\ell-j}]\frac{1+z}{1-z} \\
    &\le 2 \sum_{\ell=0}^k\binom{k}{\ell}n^{-\ell}
    \sum_{j=0}^{\ell-1}\binom{n-k}{j}n^{-j}\nonumber \\
    &< 2\sum_{\ell\ge0}\frac{(k/n)^\ell}{\ell!}
    \sum_{j\ge0}\frac{(1-k/n)^j}{j!}\nonumber \\
    &= 2e^{k/n+1-k/n}=2e.\nonumber
\end{align}
On the other hand, by \eqref{Delta-diff},
\begin{align}\label{Delta-mono}
    n& \lpa{\Delta_n^*(k+1)-\Delta_n^*(k)}\nonumber\\
    & \ge \sum_{\ell=0}^k \binom{k}{\ell}n^{-\ell}
    \sum_{j=0}^\ell \binom{n-k}{j}n^{-j}
    \ge\Lpa{1+\frac1n}^k > 1.
\end{align}
This proves \eqref{df-Delta}.

Recalling the definition \[
\Delta_n(k) = \Delta_{n-1}^*(k) \left(1-\frac{1}{n}\right)^n,% \le e^{-1}\Delta_{n-1}^*(k) ,
\]
we finally obtain
\[
\Delta_n(k+1)  - \Delta_n(k) \le \left(1-\frac{1}{n}\right)^n \frac{2e}{n-1} \le \frac{2}{n-1}
\]
and 
\[
\Delta_n(k+1)  - \Delta_n(k) \ge \frac{(1-1/n)^n}{n-1}  = \frac{(1-1/n)^{n-1}}{n} \ge \frac{1}{en}
\]
as claimed.
\end{proof}

Recall that we want to investigate the difference
\[
\frac{1}{\Delta(k-1)} - \frac{1}{\Delta(k)} = \frac{\Delta(k) - \Delta(k-1)}{\Delta(k-1)\Delta(k)}
\]
(and later $\frac{1}{\Delta(k-\ell)} - \frac{1}{\Delta(k)} $ for $\ell\ge 1$); 
thus we need bounds on $\Delta(k)$ itself. The following lemma 
gives such bounds along with estimations of the transition probabilities.
We will use 
the notation $p(k,\le j) = \sum_{\ell=0}^{j} p(k,\ell)$ for the probability to change from state~$k$ 
to state at most~$j$.

\begin{lemma}
\label{lem:delta-bounds}
For $k\ge 1$, $e^{-1}\frac{k}{n}\le \Delta(k) \le \frac{k}{n}$. 
Moreover, for $\ell\ge 1$ it holds that 
$p(k,k-\ell)\le p(k,\le k-\ell) \le \binom{k}{\ell}\lpa{\frac{1}{n}}^\ell \le \left(\frac{k}{n}\right)^\ell / \ell!$.
\end{lemma}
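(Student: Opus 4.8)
The plan is to express everything through the two independent binomial random variables describing one mutation from a search point with $k$ zero-bits: let $Z\sim\mathrm{Bin}(k,1/n)$ count the zero-bits flipped to ones and $O\sim\mathrm{Bin}(n-k,1/n)$ count the one-bits flipped to zeros. A step is accepted precisely when the number of ones does not decrease, i.e.\ when $Z\ge O$, and in that case the number of zero-bits drops by $Z-O$; the double sum in \eqref{eq:delta-i} is just the term-by-term expansion of $\Delta(k)=E\bigl((Z-O)\,\indic{Z\ge O}\bigr)$. Casting the statement in these terms reduces both halves of the lemma to elementary facts about $Z$ and $O$.

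For the two bounds on $\Delta(k)$ I would argue as follows. For the upper bound, note $(Z-O)\,\indic{Z\ge O}\le Z$ pointwise, so $\Delta(k)\le E(Z)=k/n$. For the lower bound I would discard all but the single most favourable type of step, namely the one flipping exactly one of the $k$ zero-bits and leaving every other bit untouched; such a step is always accepted and decreases the zero-count by one, whence $\Delta(k)\ge k\cdot\tfrac1n\bigl(1-\tfrac1n\bigr)^{n-1}\ge e^{-1}k/n$, the last step using the inequality $(1-1/n)^{n-1}\ge e^{-1}$.

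For the transition probabilities, the first inequality $p(k,k-\ell)\le p(k,\le k-\ell)$ is immediate, since the left-hand side is one non-negative summand of the right-hand side. For the second, I would observe that for $\ell\ge1$ reaching a state with at most $k-\ell$ zero-bits forces a net decrease of at least $\ell$, hence $Z-O\ge\ell$, which in turn forces $Z\ge\ell$ because $O\ge0$; therefore $p(k,\le k-\ell)\le\Prob(Z\ge\ell)$. Bounding this upper binomial tail by a union bound over the $\binom{k}{\ell}$ subsets of the $k$ zero-bits of size $\ell$, each of which flips entirely with probability $(1/n)^\ell$, gives $\Prob(Z\ge\ell)\le\binom{k}{\ell}(1/n)^\ell$, and finally $\binom{k}{\ell}\le k^\ell/\ell!$ yields the closing bound $(k/n)^\ell/\ell!$.

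There is no deep obstacle here; the only points requiring care are bookkeeping ones. The most delicate is the set-theoretic inclusion behind the tail bound: one must argue that one-bit flips can only \emph{increase} the zero-count, so a net decrease of $\ell$ genuinely requires at least $\ell$ zero-bits to flip, and that the acceptance condition does not shrink this event further — which it cannot, since any state strictly below $k$ is automatically accepted. The second mild subtlety is that the elementary inequality $(1-1/n)^{n-1}\ge e^{-1}$ must hold uniformly for every $n\ge1$, not merely asymptotically, so that the bound $e^{-1}k/n$ is valid non-asymptotically; this follows from $\ln(1-u)\ge -u/(1-u)$ applied with $u=1/n$.
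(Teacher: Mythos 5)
Your proof is correct and follows exactly the standard arguments the paper invokes in its (deliberately terse) proof: the upper bound on $\Delta(k)$ from the expected number of flipped zero-bits, the lower bound from single-bit flips together with $(1-1/n)^{n-1}\ge e^{-1}$, and the tail bound on $p(k,\le k-\ell)$ from a union bound over the $\binom{k}{\ell}$ size-$\ell$ subsets of zero-bits. The reformulation via $Z\sim\mathrm{Bin}(k,1/n)$ and $O\sim\mathrm{Bin}(n-k,1/n)$ is a clean way to make these "well-known standard arguments" fully explicit.
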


\begin{proof}
This proof uses well-known standard arguments. The 
upper bound on the drift follows from considering the expected number of flipping bits among $k$ one-bits 
and the lower bound from looking into steps flipping one bit only. The bound on the transition probability 
considers all mutations flipping at least $\ell$ bits.
\end{proof}

Intuitively, 
the parenthesized term in \eqref{eq:linearize-pot-diff} estimates the error incurred by estimating the potential function 
using the slope at~$k$ for a step of size~$\ell+1$. This error will below in Lemma~\ref{lem:bound-eta-i} be weighted by the probability 
of making a step of such size, more precisely by the probability of jumping from~$k$ to $j=k-\ell-1$. Assembling the previous 
lemmas, we now give a bound 
for the  difference of $1/\Delta(\cdot)$.

\begin{lemma}
\label{lem:diff-1-over-Delta-k}
For $k\ge 1$ and $\ell\in\{1,\dots,k\}$ it holds that
\[
\frac{1}{\Delta(k-\ell)} - \frac{1}{\Delta(k)} \le 
\frac{2e^2 \ell n^2}{k(k-\ell)(n-1)}.
%\frac{4e^2 kn}{k(k-\ell)}.
\]
\end{lemma}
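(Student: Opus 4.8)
The plan is to write the difference as a single quotient and then bound its numerator and denominator separately using the two preceding lemmas. Concretely, I would start from
\[
\frac{1}{\Delta(k-\ell)} - \frac{1}{\Delta(k)} = \frac{\Delta(k) - \Delta(k-\ell)}{\Delta(k-\ell)\,\Delta(k)},
\]
which is legitimate whenever $\ell<k$; the boundary case $\ell=k$ is trivial, since then $\Delta(k-\ell)=\Delta(0)=0$ and the right-hand side of the claimed bound is $+\infty$. The monotonicity of $\Delta$ already recorded above guarantees that the numerator is non-negative, so the quotient is well defined.

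For the numerator I would telescope,
\[
\Delta(k) - \Delta(k-\ell) = \sum_{m=k-\ell+1}^{k} \bigl(\Delta(m) - \Delta(m-1)\bigr),
\]
and bound each of these $\ell$ consecutive differences from above by $2/(n-1)$ using the upper bound in Lemma~\ref{lem:diff-delta}; the indices $m-1$ range over $\{k-\ell,\dots,k-1\}$, all of which are at most $n-1$, so the lemma indeed applies to every summand. This yields $\Delta(k)-\Delta(k-\ell)\le 2\ell/(n-1)$.

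For the denominator I would invoke the lower bound $\Delta(j)\ge e^{-1}j/n$ from Lemma~\ref{lem:delta-bounds}, applied at $j=k-\ell$ and at $j=k$, to get $\Delta(k-\ell)\,\Delta(k)\ge e^{-2}(k-\ell)k/n^2$. Dividing the numerator bound by the denominator bound then produces exactly
\[
\frac{2\ell/(n-1)}{e^{-2}(k-\ell)k/n^2} = \frac{2e^2\ell n^2}{k(k-\ell)(n-1)},
\]
which is the asserted inequality.

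Given how much structural work has already been absorbed into Lemmas~\ref{lem:delta-bounds} and~\ref{lem:diff-delta}, there is no genuinely deep obstacle here: the statement is a direct combination of the value bounds with the consecutive-difference bounds. The only points needing care are bookkeeping — verifying that the telescoped differences stay within the index range for which Lemma~\ref{lem:diff-delta} is valid, and confirming that the factors $2$ and $e^{2}$ multiply out to precisely the stated constant rather than a weaker one. I expect no surprises; the real effort of this lemma lies in the two inputs it consumes rather than in the argument combining them.
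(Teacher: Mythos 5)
Your proposal is correct and follows essentially the same route as the paper: write the difference as $\frac{\Delta(k)-\Delta(k-\ell)}{\Delta(k)\Delta(k-\ell)}$, bound the numerator by $2\ell/(n-1)$ via the telescoped consecutive-difference bound of Lemma~\ref{lem:diff-delta}, and bound the denominator below by $e^{-2}k(k-\ell)/n^2$ via Lemma~\ref{lem:delta-bounds}. Your explicit handling of the boundary case $\ell=k$ is a small extra care the paper omits, but the argument is identical.
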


\begin{proof}
Using Lemma~\ref{lem:diff-delta} and Lemma~\ref{lem:delta-bounds}, 
\begin{align*}
\frac{1}{\Delta(k-\ell)} - \frac{1}{\Delta(k)} & = 
\frac{\Delta(k)-\Delta(k-\ell)}{\Delta(k)\Delta(k-\ell)} \\
& \le 
\frac{2\ell/(n-1)}{e^{-2} k(k-\ell)/n^2} = \frac{2e^2 \ell n^2}{k(k-\ell)(n-1)} .
\qedhere
\end{align*}
\end{proof}

If we jump from $k\ge 2$ to  $k-\ell-1$ then the parenthesized term 
in~\eqref{eq:linearize-pot-diff} (intuitively incurred by linearizing the potential function 
using the slope at~$k$) equals 
\[
\left(\frac{1}{\Delta(k-\ell)} - \frac{1}{\Delta(k)}\right) 
+ \dots + \left(\frac{1}{\Delta(k-1)} - \frac{1}{\Delta(k)}\right) \le 
\frac{\ell}{\Delta(k-\ell)}-\frac{\ell}{\Delta(k) }.
\]

Finally, we weigh these differences with the respective probabilities and 
 put everything together to bound the whole expression~\eqref{eq:linearize-pot-full-diff}.
\begin{lemma}
\label{lem:bound-eta-i}
For $k\in\{1,\dots,\tr{n/2}\}$ it holds that 
$\eta(k) \le 1+\frac{2e^{5/2}}{n-1}$.
\end{lemma}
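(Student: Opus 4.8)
The plan is to bound the sum in~\eqref{eq:linearize-pot-full-diff} term by term, grouping the contributions according to the size of the downward jump. Writing the new state as $k-\ell-1$ with $\ell\ge 0$, the $\ell=0$ summand (a jump to $k-1$) contributes nothing, since the parenthesized expression in~\eqref{eq:linearize-pot-diff} vanishes there. For $\ell\ge 1$ I would feed in the two ingredients already prepared: the telescoping estimate $\sum_{j=k-\ell}^{k-1}\bigl(\frac1{\Delta(j)}-\frac1{\Delta(k)}\bigr)\le \frac{\ell}{\Delta(k-\ell)}-\frac{\ell}{\Delta(k)}$ for the parenthesized term, combined with Lemma~\ref{lem:diff-1-over-Delta-k}, and the transition-probability bound $p(k,k-\ell-1)\le \binom{k}{\ell+1}\bigl(\frac1n\bigr)^{\ell+1}$ from Lemma~\ref{lem:delta-bounds}. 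This assembles into
\[
\eta(k)-1\le \sum_{\ell=1}^{k-1}\binom{k}{\ell+1}\Bigl(\frac1n\Bigr)^{\ell+1}\cdot\frac{2e^2\ell^2 n^2}{k(k-\ell)(n-1)} .
\]

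The key simplification is the cancellation of the awkward factor $\frac1{k(k-\ell)}$ against the binomial coefficient. Since $\binom{k}{\ell+1}=\frac{k(k-1)\cdots(k-\ell)}{(\ell+1)!}$, dividing by $k(k-\ell)$ removes the outer two factors of the numerator and leaves $\frac{(k-1)\cdots(k-\ell+1)}{(\ell+1)!}\le \frac{k^{\ell-1}}{(\ell+1)!}$. Substituting this and collecting the powers of $n$ reduces each summand to $\frac{2e^2\ell^2}{(\ell+1)!(n-1)}\,(k/n)^{\ell-1}$, so that the $n$-dependence collapses to a single prefactor $\frac{2e^2}{n-1}$ --- precisely the $O(1/n)$ we are after.

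It then remains to bound the residual series. Here I would use the hypothesis $k\le\tr{n/2}$, giving $(k/n)^{\ell-1}\le 2^{-(\ell-1)}$, together with the elementary inequality $\frac{\ell^2}{(\ell+1)!}\le\frac1{(\ell-1)!}$ (equivalent to $\ell^2\le\ell(\ell+1)$), after which reindexing by $m=\ell-1$ yields
\[
\sum_{\ell\ge 1}\frac{\ell^2}{(\ell+1)!}\,(k/n)^{\ell-1}\le \sum_{\ell\ge1}\frac1{(\ell-1)!}\,2^{-(\ell-1)}=\sum_{m\ge0}\frac{(1/2)^m}{m!}=e^{1/2}.
\]
Multiplying by the prefactor $\frac{2e^2}{n-1}$ gives $\eta(k)-1\le \frac{2e^{5/2}}{n-1}$, which is the claimed bound.

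The main obstacle I anticipate is not any single estimate but keeping the bookkeeping of powers of $n$ honest: a premature bound such as $(k/n)^\ell\le 2^{-\ell}$ or $\frac1{k-\ell}\le 1$ discards the decisive factor $n$ and only yields $\eta(k)-1=O(1)$. The correct route must first exploit the cancellation between $\binom{k}{\ell+1}$ and $\frac1{k(k-\ell)}$, and only afterwards throw away the geometric factor $(k/n)^{\ell-1}$; it is this ordering, rather than the individual inequalities, that simultaneously produces the $1/n$ scaling and the clean constant $e^{1/2}$.
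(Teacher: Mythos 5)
Your proof is correct and follows essentially the same route as the paper's: the same decomposition by jump size with the vanishing $\ell=0$ term, the same telescoping bound $\frac{\ell}{\Delta(k-\ell)}-\frac{\ell}{\Delta(k)}$, the same inputs from Lemmas~\ref{lem:delta-bounds} and~\ref{lem:diff-1-over-Delta-k}, and the same final use of $k\le\tr{n/2}$ to produce the factor $e^{1/2}$ and the constant $2e^{5/2}$. The only divergence is in evaluating the residual sum: the paper applies the integral representation $\frac1a=\int_0^1 t^{a-1}\dd{t}$ together with the binomial theorem to obtain the closed form $\frac{k}{n^2}\lpa{1+\frac1n}^{k-1}$, whereas you cancel $\binom{k}{\ell+1}$ against $\frac{1}{k(k-\ell)}$ termwise and sum the resulting exponential series --- an equally valid and arguably more elementary step.
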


\begin{proof}
Using Lemma~\ref{lem:delta-bounds} and Lemma~\ref{lem:diff-1-over-Delta-k}, 
\begin{align*}
    \sum_{\ell=0}^{k-1}\;& p(k, k-\ell-1) 
    \left(\frac{\ell}{\Delta(k-\ell)} 
    - \frac{\ell}{\Delta(k)}\right)\\
    &=\sum_{\ell=1}^k p(k, k-\ell) 
    \left(\frac{\ell-1}{\Delta(k-\ell+1)} 
    - \frac{\ell-1}{\Delta(k)}\right)\\
    &\le \frac{2e^2n^2}{k(n-1)}
    \sum_{\ell=1}^k \binom{k}{\ell}n^{-\ell}
    \frac{(\ell-1)^2}{k-\ell+1}\\
    &\le \frac{2e^2n^2}{k(n-1)}
    \sum_{\ell=0}^k\binom{k}{\ell}n^{-\ell}
    \frac{\ell(\ell-1)}{k-\ell+1}.
\end{align*}
Applying the integral representation $\frac1a=\int_0^1 t^{a-1}\dd t$ for $a>0$, we obtain 
\begin{align*}
    \sum_{\ell=0}^k\binom{k}{\ell}n^{-\ell}
    \frac{\ell(\ell-1)}{k-\ell+1}
    &= \int_0^1 \sum_{\ell=0}^k \binom{k}{\ell}\ell(\ell-1)n^{-\ell}
    t^{k-\ell}\dd t\\
    &= \frac{k(k-1)}{n^2}\int_0^1\Lpa{t+\frac1n}^{k-2}\dd t\\
    &\le \frac{k}{n^2}\Lpa{1+\frac1n}^{k-1}.
\end{align*}
Thus 
\begin{align*}
    &\sum_{\ell=0}^{k-1} p(k, k-\ell-1) 
    \left(\frac{\ell}{\Delta(k-\ell)} 
    - \frac{\ell}{\Delta(k)}\right)\\
    &\qquad\le \frac{2e^2n^2}{k(n-1)}
    \cdot \frac{k}{n^2}\Lpa{1+\frac1n}^{k-1}\\
    &\qquad= \frac{2e^2}{n-1}\Lpa{1+\frac1n}^{k}
    \le \frac{2e^{5/2}}{n-1}.
\end{align*}
\end{proof}

We are now ready to complete the proof of Theorem~\ref{theo:lower}.

\begin{proofof}{Theorem~\ref{theo:lower}}
According to Theorem~\ref{theo:var-lower-error}, we 
have
\begin{align*}
\expect{T\mid X_0=n/2} & \ge \sum_{k=1}^{\tr{n/2}}  \frac{1}{ \eta^*\Delta(k)} 
 \ge \sum_{k=1}^{\tr{n/2}}\frac{1}{\Delta(k)} \left(1-\frac{\eta^*-1}{\eta^*}\right),
\end{align*}
and by Lemma~\ref{lem:bound-eta-i}
\[
\eta^* \le 1+\frac{2e^{5/2}}{n-1}.
\]
Since $\sum_{k=1}^{\tr{n/2}} 1/\Delta(k) \le enH_{\tr{n/2}} \le en\log n$ as observed in 
Section~\ref{sec:triv-upper}, we altogether obtain
\begin{align*}
\expect{T\mid X_0=n/2} & \ge \sum_{k=1}^{\tr{n/2}}  \frac{1}{ \Delta(k)} - \frac{(en\log n)(2e^{5/2})}{n-1} \\ & \ge  
\sum_{k=1}^{\tr{n/2}}  \frac{1}{ \Delta(k)} - (4e^{7/2})\log n,
\end{align*}
where the last inequality used $n\ge 2$. 
Altogether, the theorem has been established with $c_1=4e^{7/2}\approx 132.56$.
\end{proofof}

In conjunction with Section~\ref{sec:triv-upper}, we 
have determined the expected runtime of the \ea on \om (starting in state $\tr{n/2}$, \ie, with $\cl{n/2}$ one-bits) up to 
an additive term bounded by $c_1\log n$.
As already mentioned in the introduction, terms of even lower order 
down to $O((\log n)/n)$ have been determined in~\cite{HwangEVCO18} by a more technical analysis. Our result 
features a non-asymptotic error bound.

\section{Improving the $\bf \sum_{k=1}^{\tr{n/2}}\frac{1}{\Delta(k)}$ Bound}
\label{sec:upper}

The upper bound $\sum_{k=1}^{\tr{n/2}} 1/\Delta(k)$ derived Section~\ref{sec:triv-upper} precisely 
characterizes the expected runtime of the \ea on \om, but is a slight 
overestimation resulting from the inequality $1/h(X_{t+1})\ge 1/h(X_t)$ in the proof 
of Theorem~\ref{theo:variable-upper}; intuitively this corresponds to estimating the progress 
from state $X_t$ via a linearized potential function of slope $1/h(X_t)$, which is the derivative 
of $g$  at $X_t$. 

We can improve the bound on the expected runtime  
by estimating the error stemming from this inequality and will gain  a logarithmic term. To this end,
we study the following simple analogue of Theorem~\ref{theo:var-lower-error}.

\begin{theorem}
\label{theo:var-upper-error}[Variable drift, upper bound, with error bound]
Let $(X_t)_{t\ge 0}$ be a non-increasing Markov process on the state space $\{0,\dots,N\}$ for some integer~$N$.
Let  $\Delta\colon \{1,\dots,N\} \to \R^+$ be a function satisfying  
$E(X_t-X_{t+1} \mid X_t=k) \ge \Delta(k)$ for $k\in\{1,\dots,N\}$. Let 
\[
\eta(k) \coloneqq  \expect{\sum_{j=X_{t+1}+1}^{k} \frac{1}{\Delta(j)}  \bigm| X_t=k}  
\]
and
\[
\eta^* \coloneqq \min_{k=1,\dots,N} \eta(k).
\]
 Then it holds for the first hitting time
$T:=\min\{t\mid X_t=0\}$ that
\[
\expect{T\mid X_0} \le
\sum_{k=1}^{X_0}  \frac{1}{ \eta^*\Delta(k)} .
\]
\end{theorem}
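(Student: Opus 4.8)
The plan is to follow the proof of Theorem~\ref{theo:var-lower-error} essentially verbatim, changing only the direction of the extremum that defines $\eta^*$ and invoking part~1 instead of part~2 of the additive drift theorem. I would take the same potential function $g(k)\coloneqq\sum_{j=1}^{k}1/\Delta(j)$, with $g(0)=0$. Because the process is non-increasing, conditioning on $X_t=k$ forces $X_{t+1}\le k$, so the telescoping identity $g(k)-g(X_{t+1})=\sum_{j=X_{t+1}+1}^{k}1/\Delta(j)$ holds pointwise on $\{X_t=k\}$; taking conditional expectations gives
\[
\expect{g(X_t)-g(X_{t+1})\mid X_t=k}=\eta(k).
\]
As in the lower-bound case, this drift computation relies only on the non-increasing property together with the definition of $\eta(k)$, and not on the inequality relating $\expect{X_t-X_{t+1}\mid X_t=k}$ to $\Delta(k)$.

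The decisive change is that now $\eta^*=\min_{k}\eta(k)$, so the transformed process obeys $\expect{g(X_t)-g(X_{t+1})\mid X_t=k}=\eta(k)\ge\eta^*$ for every non-target state~$k$. This is exactly the hypothesis of Theorem~\ref{theo:additive-drift}, part~1, applied to $g(X_t)$ with drift lower bound $\delta\coloneqq\eta^*$. Since $\Delta(j)>0$, the map $g$ is strictly increasing, hence sends the state space into $\R^{\ge0}$, satisfies $g(k)=0$ if and only if $k=0$, and therefore preserves the first hitting time $T$ of state~$0$. Part~1 then yields
\[
\expect{T\mid\filtzero}\le\frac{g(X_0)}{\eta^*}=\sum_{k=1}^{X_0}\frac{1}{\eta^*\Delta(k)},
\]
which is the claim.

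I do not expect a genuine obstacle here, since this is the upper-bound companion of an already established result and the two arguments are symmetric; the only point worth a careful remark is that part~1 of Theorem~\ref{theo:additive-drift}, unlike part~2, imposes no a-priori bound $g(X_t)\le b$ on the transformed state, so the proof is in fact marginally shorter than its lower-bound counterpart. All the substantive difficulty, namely showing that $\eta^*$ is close enough to~$1$ to turn this estimate into the desired logarithmic improvement, is deferred to the concrete instantiation for \om carried out in the remainder of Section~\ref{sec:upper}.
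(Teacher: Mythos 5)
Your proposal is correct and follows essentially the same route as the paper: the same potential function $g(k)=\sum_{j=1}^{k}1/\Delta(j)$, the observation that its drift from state~$k$ equals $\eta(k)\ge\eta^*$, and an application of Theorem~\ref{theo:additive-drift}, part~1. The paper states this only in one sentence by reference to the lower-bound proof, so your write-up is merely a more explicit version of the same argument.
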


\begin{proof}
We proceed analogously to the proof of Theorem~\ref{theo:var-upper-error},  
use the potential function $g(k)=\sum_{j=1}^k 1/\Delta(j)$ and apply additive drift 
analysis (Theorem~\ref{theo:additive-drift}, part~1) 
with the lower bound 
$\eta^*$ on its drift.
\end{proof}

We state our improved result, carrying over notation from previous sections such as 
the definition of the drift $\Delta(k)$ with respect to \ea and \om.

\begin{theorem}[Improved Upper Bound]
\label{theo:improved-upper}
Let $n\ge 4$. 
Then the expected optimization time of the \ea on \om (starting at $\cl{n/2}$ ones) is at most
$\sum_{k=1}^{\tr{n/2}} 1/\Delta(k) - c_2\log n$ for some constant~$c_2>0$.
\end{theorem}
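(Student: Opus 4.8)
The plan is to mirror the lower-bound argument of Section~\ref{sec:lower}, now seeking a uniform \emph{lower} bound on the quantity $\eta(k)$ from Theorem~\ref{theo:var-upper-error}. Since that theorem yields $\expect{T\mid X_0}\le \sum_{k=1}^{X_0}1/(\eta^*\Delta(k))$ with $\eta^*=\min_k\eta(k)$, it suffices to show $\eta(k)\ge 1+\Omega(1/n)$: then $1/\eta^*\le 1-\Omega(1/n)$, and multiplying the factor $1-\Omega(1/n)$ into the sum $\sum 1/\Delta(k)=\Theta(n\log n)$ removes an $\Omega(\log n)$ term, which is exactly the claimed saving of $c_2\log n$. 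Concretely, I would first establish a lower-bound analogue of Lemma~\ref{lem:bound-eta-i}.

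For the key estimate I write, exactly as in \eqref{eq:linearize-pot-full-diff},
\[
\eta(k)-1=\sum_{\ell=0}^{k-1}p(k,\ell)\Lpa{\sum_{j=\ell+1}^{k}\frac1{\Delta(j)}-\frac{k-\ell}{\Delta(k)}},
\]
and observe that every summand is nonnegative because $\Delta$ is monotone increasing (so $1/\Delta(j)\ge 1/\Delta(k)$ for $j\le k$). For a lower bound I may therefore discard all terms except the ``jump-by-two'' contribution $\ell=k-2$, which requires $k\ge 2$ and equals $p(k,k-2)\lpa{1/\Delta(k-1)-1/\Delta(k)}$. Here $p(k,k-2)\ge\binom{k}{2}n^{-2}(1-1/n)^{n-2}\ge \frac{k(k-1)}{2en^2}$ (flip exactly two of the $k$ zero-bits and nothing else), while Lemma~\ref{lem:diff-delta} gives $\Delta(k)-\Delta(k-1)\ge 1/(en)$ and Lemma~\ref{lem:delta-bounds} gives $\Delta(j)\le j/n$, so that $1/\Delta(k-1)-1/\Delta(k)\ge \frac{n}{ek(k-1)}$. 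Multiplying these two bounds cancels the $k$-dependence and yields $\eta(k)-1\ge \frac1{2e^2n}$ uniformly for $k\ge 2$.

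The main obstacle is the boundary state $k=1$, where this argument breaks down: the only non-trivial move flips the last remaining zero-bit, so $\eta(1)=p(1,0)\cdot 1/\Delta(1)=\Delta(1)/\Delta(1)=1$ exactly. Consequently $\min_k\eta(k)=1$ and a verbatim appeal to Theorem~\ref{theo:var-upper-error} gives no improvement whatsoever. To repair this I would not invoke the theorem as a black box but re-run additive drift (Theorem~\ref{theo:additive-drift}, part~1) with a slightly lifted potential $\tilde g(k)\coloneqq \sum_{j=1}^{k}1/\Delta(j)+a\,\indic{k\ge 1}$ for a small constant $a>0$. For $k\ge 2$ the offset only adds the nonnegative quantity $a\,\prob{X_{t+1}=0\mid X_t=k}$ to the drift, so the bound $\ge 1+\frac1{2e^2n}$ is preserved; at $k=1$ the drift becomes $1+a\Delta(1)\ge 1+a/(en)$, which matches $1+\frac1{2e^2n}$ once we set $a=\frac1{2e}$.

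With the uniform lower bound $\delta\coloneqq 1+\frac1{2e^2n}$ on the drift of $\tilde g$, additive drift gives $\expect{T\mid X_0=\tr{n/2}}\le \tilde g(\tr{n/2})/\delta=\lpa{\sum_{k=1}^{\tr{n/2}}1/\Delta(k)+\tfrac1{2e}}/\delta$. Using $(1+x)^{-1}\le 1-x/2$ for $0\le x\le 1$ together with the elementary lower bound $\sum_{k=1}^{\tr{n/2}}1/\Delta(k)\ge nH_{\tr{n/2}}=\Omega(n\log n)$ (which follows from $\Delta(k)\le k/n$), the factor $1/\delta$ shaves off $\tfrac12\cdot\tfrac1{2e^2n}\cdot\Omega(n\log n)=\Omega(\log n)$, whereas the additive $\tfrac1{2e}$ contributes only $O(1)$. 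Collecting terms and using $n\ge 4$ to absorb the $O(1)$ and lower-order pieces into the constant then yields $\expect{T}\le \sum_{k=1}^{\tr{n/2}}1/\Delta(k)-c_2\log n$ for a suitable $c_2>0$, completing the proof.
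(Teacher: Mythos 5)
Your overall strategy---establishing $\eta(k)\ge 1+\Omega(1/n)$ for the potential $g(k)=\sum_{j=1}^{k}1/\Delta(j)$ and converting the relative gain $1/\eta^*\le 1-\Omega(1/n)$ on a sum of size $\Theta(n\log n)$ into a $-\Omega(\log n)$ correction---is the same as the paper's. Your key estimate for $k\ge2$ is correct and in fact more economical than the paper's: you observe that every summand of $\eta(k)-1$ is nonnegative by monotonicity of $\Delta$, keep only the jump-by-two term, and multiply $p(k,k-2)\ge \frac{k(k-1)}{2en^2}$ by $1/\Delta(k-1)-1/\Delta(k)\ge \frac{n}{ek(k-1)}$ to get $\eta(k)-1\ge\frac{1}{2e^2n}$; the paper instead conditions on the event $X_{t+1}<k-1$ and runs a longer computation (via Lemma~\ref{lem:diff-1-over-delta-lower}) to the same $1+\Theta(1/n)$ conclusion. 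You also correctly spotted the real obstacle, namely $\eta(1)=1$, which makes a verbatim application of Theorem~\ref{theo:var-upper-error} vacuous. Where you diverge is the repair: you lift the potential by $a\,\indic{k\ge1}$, whereas the paper bounds the hitting time of $\{0,1\}$ with $\eta^*$ taken over $k\ge2$ and then adds the \emph{exact} expected transition time $1/\Delta(1)$, which is precisely the missing $k=1$ summand of $\sum_{k=1}^{\tr{n/2}}1/\Delta(k)$, so no spurious additive constant ever enters.

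That difference is where your argument has a genuine gap for the stated range $n\ge4$. The constraint $a\,\Delta(1)\ge\frac{1}{2e^2n}$ together with $\Delta(1)=\Theta(1/n)$ forces $a=\Theta(1)$ (your $a=\frac1{2e}\approx0.18$), and your final bound has the form $\sum_{k=1}^{\tr{n/2}}1/\Delta(k)+\frac1{2e}-\frac{H_{\tr{n/2}}}{4e^2}$. At $n=4$ the gain $\frac{H_2}{4e^2}\approx0.05$ is smaller than the added constant $\frac1{2e}\approx0.18$, so the bound you derive is \emph{weaker} than $\sum 1/\Delta(k)$ and no positive $c_2$ can be extracted; the inequality $\frac{H_{\tr{n/2}}}{4e^2}-\frac1{2e}\ge c_2\log n$ with $c_2>0$ only starts to hold once $\tr{n/2}$ is in the hundreds. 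Moreover, the loss-to-gain ratio is essentially invariant under retuning $a$ and the target drift $1+\epsilon/n$ (both scale linearly in $\epsilon$), so the sentence ``using $n\ge4$ to absorb the $O(1)$'' cannot be made to work: your proof establishes the theorem only for $n$ above a large threshold, not for all $n\ge4$. The fix is to avoid paying the constant at all, e.g.\ as in the paper: apply the drift argument only down to state $1$ and account for the final transition exactly via $\expect{T\mid X_0}\le \frac{1}{\Delta(1)}+\sum_{k=2}^{X_0}\frac{1}{\eta^*\Delta(k)}$.
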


To prove this result, we need to invert a statement from 
Section~\ref{sec:bounding-error}.
\begin{lemma}
\label{lem:diff-1-over-delta-lower}
For $ k\in\{2,\dots,n/2\}$, \[
\frac{1}{\Delta(k-1)} - \frac{1}{\Delta(k)} \ge \frac{n}{ek^2}.
\]
\end{lemma}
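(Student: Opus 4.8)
The plan is to rewrite the difference as a single fraction and then bound its numerator from below and its denominator from above, drawing directly on the lemmas already established in Section~\ref{sec:bounding-error}. I would start from the identity
\[
\frac{1}{\Delta(k-1)} - \frac{1}{\Delta(k)} = \frac{\Delta(k) - \Delta(k-1)}{\Delta(k-1)\Delta(k)},
\]
which makes transparent that a lower bound on the whole expression follows from a lower bound on the numerator together with an upper bound on the product in the denominator.

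For the numerator, the idea is to invoke the lower half of the second display in Lemma~\ref{lem:diff-delta}, which gives $\Delta_n(k) - \Delta_n(k-1) \ge \frac{1}{en}$ throughout the relevant range (applying the stated bound on $\Delta_n(k+1)-\Delta_n(k)$ after a shift of index; for $k\in\{2,\dots,n/2\}$ the gap $k-1$ lies in $\{1,\dots,n-1\}$, so the bound applies). For the denominator, I would use the elementary estimate $\Delta(j) \le j/n$ from Lemma~\ref{lem:delta-bounds}, valid for all $j\ge 1$, to obtain
\[
\Delta(k-1)\Delta(k) \le \frac{(k-1)k}{n^2} \le \frac{k^2}{n^2}.
\]

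Combining the two estimates then yields
\[
\frac{\Delta(k) - \Delta(k-1)}{\Delta(k-1)\Delta(k)} \ge \frac{1/(en)}{k^2/n^2} = \frac{n}{ek^2},
\]
which is exactly the claim. I do not expect any real obstacle here: all the analytic content, in particular the nontrivial lower bound $\Delta(k)-\Delta(k-1) \ge \frac{1}{en}$ obtained via the generating-function representation of Lemma~\ref{lmm}, has already been carried out in Lemma~\ref{lem:diff-delta}. The only point requiring mild care is to match the directions of the inequalities and to check that the index ranges line up — namely that $k-1\ge 1$ (guaranteed by $k\ge 2$) so that the product bound and the shifted difference bound both apply. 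The hypothesis $k\le n/2$ plays no role in the argument beyond delimiting the regime of interest.
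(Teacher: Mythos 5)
Your proposal is correct and follows essentially the same route as the paper's own proof: the identity $\frac{1}{\Delta(k-1)}-\frac{1}{\Delta(k)}=\frac{\Delta(k)-\Delta(k-1)}{\Delta(k-1)\Delta(k)}$, the lower bound $\Delta(k)-\Delta(k-1)\ge\frac{1}{en}$ from Lemma~\ref{lem:diff-delta}, and the upper bound $\Delta(j)\le j/n$ from Lemma~\ref{lem:delta-bounds} applied to the denominator. Your index-range check is a small extra courtesy the paper omits, but the argument is identical in substance.
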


\begin{proof}
We proceed similarly to the proof of 
%Lemmas~\ref{lem:diff-delta-upper} and
Lemma~\ref{lem:diff-1-over-Delta-k} but 
aim at lower bounds. First, we recall from Lemma~\ref{df-Delta} that 
\[
\Delta(k)-\Delta(k-1) \ge  \frac{1}{en}.
\]
Now, using the upper bound 
$\Delta(k)\le k/n$ from Lemma~\ref{lem:delta-bounds}, we obtain
\[
\frac{1}{\Delta(k-1)} - \frac{1}{\Delta(k)} = \frac{\Delta(k)-\Delta(k-1)}{\Delta(k)\Delta(k-1)} 
\ge \frac{1}{en (k/n)^2} = \frac{n}{ek^2},
\]
which concludes the proof.
\end{proof}
We can now present the proof of the improved upper bound. 

\begin{proofof}{Theorem~\ref{theo:improved-upper}}
The aim is to apply Theorem~\ref{theo:var-upper-error} for 
some $\eta^*=1+c/n$, where $c>0$ is constant. 
Since state~$1$ is special in that it only has one possible successor, we 
consider 
$T_1:=\min\{t\mid X_t\le 1\}$ instead and the following straightforward generalization of 
the theorem:
\[
\expect{T_1\mid X_0} \le
\sum_{k=2}^{X_0}  \frac{1}{ \eta^*\Delta(k)}, 
\]
where $\eta^* \coloneqq \min_{k=2,\dots,n} \eta(k)$. 
This implies  
\[
\expect{T_0\mid X_0=n/2} \le
\frac{1}{\Delta(1)} + \sum_{k=2}^{\tr{n/2}}  \frac{1}{ \eta^*\Delta(k)}
\]
since the expected transition time from state~$1$ to~$0$ is exactly $1/\Delta(1)$. 

We now show that $\eta(k)\ge 1+c_1/n$ for some constant $c_1>0$ and $k\in\{2,\dots,\tr{n/2}\}$. 
Note that (conditioning on $X_t=k$ everywhere) 
\begin{align*}
\eta(k) & = \expect{\sum_{j=X_{t+1}+1}^{k} \frac{1}{\Delta(j)} } \\
& = \expect{\sum_{j=X_{t+1}+1}^{k} \frac{1}{\Delta(j)} \mid  X_{t+1}<k-1 } \,\, \prob{X_{t+1}<k-1} \\ 
& \qquad + 
 \frac{\expect{(k-X_{t+1})\mid X_{t+1}\ge k-1}}{\Delta(k)} \,\prob{X_{t+1}\ge k-1} 
\\
& = 
\expect{\sum_{j=X_{t+1}+1}^{k} \frac{1}{\Delta(j)} \mid  X_{t+1}<k-1 } \, \prob{X_{t+1}<k-1} \\
& \qquad + \frac{\expect{(k-X_{t+1})\indic{X_{t+1}\ge k-1}}}{\Delta(k)}.
\end{align*}

The first term on the right-hand side can be bounded from below by
\[
 \left(\frac{\expect{(k-1-X_{t+1}) \mid X_{t+1}<k-1}}{\Delta(k-1)} + \frac{1}{\Delta(k)}\right)\prob{X_{t+1}<k-1} 
\]
since $\Delta(k)$ is non-decreasing. Using Lemma~\ref{lem:diff-1-over-delta-lower}, 
the last expression is further bounded from below by 
\begin{align*}
& \left(\expect{(k-1-X_{t+1}) \mid X_{t+1}<k-1}  \left(\frac{1}{\Delta(k) }   
+ \frac{n}{ek^2}\right)+\frac{1}{\Delta(k)}\right) \\
& \hspace*{6cm} \mbox{}\cdot \prob{X_{t+1}<k-1},
\end{align*}
which, using 
\[\prob{X_{t+1}<k-1} \ge \frac{e^{-1}k(k-1)}{2n^2} \ge e^{-1} \frac{k^2}{4n^2}\]
and \[
\expect{(k-1-X_{t+1}) \mid X_{t+1}<k-1 }\ge 1,
\] 
is at least
\[
 \left(\frac{\expect{(k-X_{t+1}) \indic {X_{t+1}<k-1} }}{\Delta(k) }\right) + \frac{e^{-2}}{4n}.
\]
Putting everything together, we 
have
\begin{align*}
\eta(k)   & \ge 
\frac{\expect{(k-X_{t+1})\indic{X_{t+1}\ge k-1}}}{\Delta(k)} \\
& \quad +  
 \left(\frac{\expect{(k-X_{t+1}) \indic{X_{t+1}<k-1} }}{\Delta(k) }\right)  + \frac{e^{-2}}{4n}  \\
& = \frac{\Delta(k)}{\Delta(k)} + \frac{e^{-2}}{4n} = 1+\frac{e^{-2}}{4n},
\end{align*}
so  $\eta^*\ge 1+e^{-2}/(4n)$. 
We conclude the proof by noting that 
\[
\sum_{k=2}^{\tr{n/2}} \frac{1}{\Delta(k)(1+e^{-2}/(4n))} \le \sum_{k=2}^{\tr{n/2}} \frac{1}{\Delta(k)} -  
\sum_{k=2}^{\tr{n/2}} \frac{e^{-2}}{4n (1+e^{-2}/4) \Delta(k)},
\]
which, using 
$\sum_{k=2}^{\tr{n/2}} \frac{1}{\Delta(k)} \ge n (H_{\tr{n/2}}-1) \ge n(\log n)/3$ for $n\ge 4$, 
amounts to 
\[
\expect{T\mid X_0=n/2} \le \sum_{k=1}^{\tr{n/2}} \frac{1}{\Delta(k)} - \frac{e^{-2}}{12 (1+e^{-2}/4)}
\log n.
\]
Hence, we can set $c_2=\frac{e^{-2}}{12 (1+e^{-2}/4)}\approx 1/91.69$.
\end{proofof}

\section{Formulas for The Exact Optimization Time}
\label{sec:exact}

In light of the Theorems~\ref{theo:var-lower-error} and~\ref{theo:var-upper-error} 
one might wonder whether one should try to choose a potential function that makes the ``error'' $\eta^*$ vanish 
and leads to 
a drift of exactly~$1$. It is well known \cite{Lehre12DriftTutorial,LenglerDriftSurveyArxiv} that 
letting $g(k)$ be the expected remaining optimization time from state~$k$ actually 
achieves this. 

In this section, we briefly investigate how to choose~$g(k)$ with respect to our  
setting of \ea and \om. We will obtain formulas that can also be derived manually, so the 
result is by no means new. However, it is still interesting to see that it can be derived via drift analysis.
This will turn out in the proof of the following theorem.

\begin{theorem}
\label{theo:times-transition}%[Expected times based on transition probabilities]
Let $(X_t)_{t\ge 0}$ be a non-increasing Markov process on the state space $\{0,\dots,N\}$ for some integer~$N$ 
and denote by $p(k,j)$ the transition probability from state~$k$ to state~$j$.
 Let the function $g(k)$ be recursively defined by $g(0)\coloneqq 0$ and 
for $k\ge 1$:
\[
g(k)\coloneqq \frac{1+\sum_{j=1}^{k-1}p(k,j) g(j)}{\sum_{j=0}^{k-1}p(k,j)}
\]
 Then it holds for the first hitting time
$T:=\min\{t\mid X_t=0\}$ that
\[
\expect{T\mid X_0} = 
  g(X_0).
\]
\end{theorem}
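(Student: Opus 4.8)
The plan is to apply the additive drift theorem (Theorem~\ref{theo:additive-drift}) with the given potential function $g$, showing that the drift of $g(X_t)$ equals exactly $1$ at every non-target state, whence $\expect{T\mid X_0}=g(X_0)$ follows immediately from both parts of that theorem. Concretely, I would first verify that the recursive definition of $g$ is well-posed: since the process is non-increasing, the denominator $\sum_{j=0}^{k-1}p(k,j)=\prob{X_{t+1}<k\mid X_t=k}$ must be positive for each non-target state $k\ge 1$ (otherwise the process could never leave state~$k$ and $T$ would be infinite), so $g(k)$ is defined for all $k$ by induction on~$k$, each value depending only on $g(0),\dots,g(k-1)$.

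The heart of the argument is the drift computation. Fixing a non-target state $k\ge 1$ and conditioning on $X_t=k$, I would compute
\[
\expect{g(X_t)-g(X_{t+1})\mid X_t=k}
= g(k) - \sum_{j=0}^{k} p(k,j)\,g(j).
\]
Because the process is non-increasing, the sum over successors ranges only over $j\le k$; isolating the self-loop term $p(k,k)g(k)$ and using $g(0)=0$ gives
\[
g(k) - p(k,k)g(k) - \sum_{j=1}^{k-1}p(k,j)\,g(j)
= g(k)\Bigl(1-p(k,k)\Bigr) - \sum_{j=1}^{k-1}p(k,j)\,g(j).
\]
Now $1-p(k,k)=\sum_{j=0}^{k-1}p(k,j)$ is exactly the denominator in the definition of $g(k)$, so substituting the recursive formula $g(k)\sum_{j=0}^{k-1}p(k,j)=1+\sum_{j=1}^{k-1}p(k,j)g(j)$ collapses the expression to
\[
\Bigl(1+\sum_{j=1}^{k-1}p(k,j)\,g(j)\Bigr) - \sum_{j=1}^{k-1}p(k,j)\,g(j) = 1.
\]
Thus the drift of $g(X_t)$ is identically $1$ on every non-target state.

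With $\delta=1$ serving simultaneously as an upper and a lower bound on the drift, part~1 of Theorem~\ref{theo:additive-drift} yields $\expect{T\mid\filtzero}\le g(X_0)$ and part~2 yields the matching lower bound, provided the boundedness hypothesis of part~2 is met; here the state space $\{0,\dots,N\}$ is finite, so $X_t\le N=b$ holds trivially and $g$ is bounded on it. Combining the two inequalities gives the claimed equality $\expect{T\mid X_0}=g(X_0)$. I expect the main obstacle to be purely bookkeeping rather than conceptual: one must be careful that the expectation $\expect{g(X_{t+1})\mid X_t=k}$ is taken over successors $j\in\{0,\dots,k\}$ only (legitimate because the process is non-increasing), and that the self-loop probability $p(k,k)$ is handled correctly so that $1-p(k,k)$ matches the normalizing denominator. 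A secondary point worth a remark is well-posedness when some $\sum_{j=0}^{k-1}p(k,j)=0$; but this cannot occur for a process with a finite expected hitting time, so it is safe to assume the denominators are strictly positive.
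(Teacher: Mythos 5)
Your proposal is correct and follows essentially the same route as the paper: take $g$ as the potential function, verify that its drift is exactly $1$ at every non-target state by substituting the recursive definition (using $g(0)=0$ and $1-p(k,k)=\sum_{j=0}^{k-1}p(k,j)$), and then apply both parts of the additive drift theorem to obtain matching upper and lower bounds. Your additional remarks on well-posedness of the recursion and on the boundedness hypothesis of part~2 are sensible but do not change the argument.
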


\begin{proof}
We shall use additive drift analysis (Theorem~\ref{theo:additive-drift}), which 
gives the exact expected hitting time if $\expect{g(X_t)-g(X_{t+1})\mid \filt} = \delta$, \ie, if
both the first and the second cases of the theorem hold.
 
We compute 
\begin{align*}
&  \expect{g(k)-g(X_{t+1})\mid X_t=k}  = \sum_{j=0}^{k-1} p(k,j) (g(k)-g(j)) \\
& \qquad\qquad = (1-p(k,k)) g(k) - \sum_{j=0}^{k-1} p(k,j)g(j) \\
& \qquad\qquad%= \text{plugging in g(k)}
% pluggin in g(k)
= (1-p(k,k)) \frac{1+\sum_{j=0}^{k-1}p(k,j) g(j)}{\sum_{j=0}^{k-1}p(k,j)}
- \sum_{j=0}^{k-1} p(k,j)g(j)\\
%& = 
%\frac{1+\sum_{j=0}^{k-1}p(k,j) g(j)}{\sum_{j=0}^{k-1}p(k,j)} \\
& \qquad\qquad= \left(1+\sum_{j=0}^{k-1}p(k,j) g(j)\right) - 
\sum_{j=0}^{k-1} p(k,j)g(j) = 1,
\end{align*}
with the the definition of $g(k)$ plugged in the third equality. 
 Hence, by Theorem~\ref{theo:additive-drift} the expected hitting time of state~$0$ from state~$X_0$ equals $g(X_0)/1$.
\end{proof}

That $g(k)$ equals the expected first hitting time from state~$k$ to state~$0$ can also be proved 
in an elementary induction. By writing
\[
g(k) = \frac{1}{\sum_{j=0}^{k-1}p(k,j)} + \sum_{j=1}^{k-1}\frac{p(k,j)}{\sum_{j=0}^{k-1}p(k,j)}\, g(j)
\]
we realize that the first term is the expected time to leave state~$k$ and the second term is a weighted sum 
of the remaining optimization times from smaller state, weighted by the respective transition probabilities 
conditional on leaving state~$k$. Such formulas can also be derived by inverting matrices obtained from the transition 
probabilities of the underlying Markov chain \cite{ChicanoEC15}.

We note that estimations of hitting times in finite search spaces based 
on the transition probabilities were recently presented in K{\"o}tzing and Krejca~\cite{KoetzingKrejcaPPSN2018-finite}. These estimations 
are not recursively defined and easy to evaluate. However, as the underlying scenario does 
not allow big jumps towards the optimum 
when estimating the hitting time from below, 
tight formulas for the \ea on \om cannot be proved with this approach.

We exemplarily apply Theorem~\ref{theo:times-transition} to our scenario of the \ea on \om. 
Using the transition probabilities  
\[
p(k,j)= \sum_{\ell=0}^{\min\{j,n-k\}} \binom{k}{k-j+\ell}\binom{n-k}{\ell} \left(\frac{1}{n}\right)^{k-j+2\ell}\left(1-\frac{1}{n}\right)^{n-(k-j)-2\ell} 
\]
we obtain $g(0)=0$, $g(1) = n(1-1/n)^{1-n}$, and 
\begin{footnotesize}
\begin{align*}
g(2)&=\frac{(3n^3-8n^2+6n-1)(1-1/n)^{1-n}}{2n^2-2n-1}\\[2ex]
g(3)&=\frac{
(22n^7-114n^6+203n^5-117n^4-38n^3+49n^2-7n+2)(1-1/n)^{1-n}
 }
 {12n^6-36n^5+4n^4+60n^3-23n^2-21n-2}
 \\
 &\vdots
\end{align*}
\end{footnotesize}

While these expansions obviously reflect the well-known estimate $g(k)=(1\pm o(1))en H_k$, they do not seem
readily useful in expressing the expected runtime of the \ea on \om in a closed-form 
formula depending on~$n$.

\section{The Asymptotics of the Partial Sum $\bf \sum_{k=1}^{\tr{n/2}}\frac{1}{\Delta(k)}$}
\label{sec:asymptotic-expansion}

The purpose of this section is to analyze  more precisely
how far the sum of inverse drifts $\sum_{k=1}^{\tr{n/2}}1/\Delta(k)$
differs from the expected optimization time
\[
    \expect{T\mid X_0=\tr{n/2}}
    = en \log n - C_1 n + (e/2)\log n + O(1)
\]
derived in~\cite{HwangEVCO18}. We know from the preceding analysis
that the sum of inverse drifts overestimates $\expect{T\mid X_0=\tr{n/2}}$
by a $\Theta(\log n)$-term. We will prove the following asymptotic
approximation for the sum of inverse drifts, which, when
compared with~\eqref{eq:exact-hwang}, shows their logarithmic
difference. % In the following, we will mostly use $k$ instead of~$i$
% to denote a state number since $i$ will occasionally show up as imaginary unit
% in complex analysis.

\begin{theorem} \label{thm:S1D} For large $n$,
\begin{align}\label{S1D}
    \sum_{k=1}^{\tr{n/2}}\frac1{\Delta_n(k)}
    = en\log n -C_1n + e\log n +O(1),
\end{align}    
where
\begin{align}\label{C1}
    C_1 := -e\left(\gamma-\log 2+\int_0^{1/2}
    \left(\frac1{S_1(t)}-\frac1t\right)\dd t\right)
    \approx 1.89254\dots
\end{align}    
is the same linear constant appearing in \eqref{eq:exact-hwang}. Here
\begin{align}\label{Srz}
    S_r(z) := \sum_{\ell\ge0}\frac{z^\ell}{\ell!}
    \sum_{j=0}^{\ell-1}(\ell-j)^r \frac{(1-z)^j}{j!}
    \qquad(r\ge0; z\in\mathbb{C}).
\end{align}
\end{theorem}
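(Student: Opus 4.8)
The plan is to compare the discrete drift $\Delta_n(k)$ with its ``Poissonized'' continuous counterpart $S_1(z)$ evaluated at $z=k/n$, and then to replace the resulting sum by an integral through the Euler--Maclaurin formula. Writing $z=k/n$, I observe that $S_1(z)$ in \eqref{Srz} is exactly the $n\to\infty$ limit of the representation of $\Delta_n^*(k)$ in Definition~\ref{def:delta-start}: replacing $\binom{k}{\ell}n^{-\ell}$ by $z^\ell/\ell!$ and $\binom{n+1-k}{j}n^{-j}$ by $(1-z)^j/j!$ turns the defining sum of $\Delta_n^*(k)$ into $S_1(k/n)$. Together with $\Delta_n(k)=\Delta_{n-1}^*(k)(1-1/n)^n$ this suggests $\Delta_n(k)=e^{-1}S_1(k/n)(1+o(1))$. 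Since $\sum_k 1/\Delta_n(k)=\Theta(n\log n)$, however, capturing the $e\log n$ term to additive error $O(1)$ forces me to control the \emph{relative} error to order $o(1/n)$; I therefore need a genuine first-order expansion
\[
\frac{1}{\Delta_n(k)} = \frac{e}{S_1(k/n)}\Bigl(1 + \frac{\psi(k/n)}{n} + O(n^{-2})\Bigr),
\]
uniformly in $1\le k\le \tr{n/2}$, where $\psi$ is assembled from the $1/n$ corrections of the binomial-to-Poisson passage (which is why the auxiliary functions $S_r$ of \eqref{Srz} for $r=0,2$ are introduced) together with the expansion $(1-1/n)^n=e^{-1}(1-\tfrac1{2n}+O(n^{-2}))$. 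The local behaviour $S_1(z)=z+\tfrac32 z^2+O(z^3)$ near the singular endpoint will be recorded as well.

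Second, I would feed the idealized term $e\sum_{k=1}^{\tr{n/2}} 1/S_1(k/n)$ through Euler--Maclaurin. Because $1/S_1(t)\sim 1/t$ is not integrable at $0$, I split $1/S_1(k/n)=n/k+(1/S_1(k/n)-n/k)$; the first part sums to $enH_{\tr{n/2}}=en\log n-en\log 2+e\gamma n+O(1)$, while the second is a Riemann sum for $en\int_0^{1/2}(1/S_1(t)-1/t)\,\dd t$ with $O(1)$ Euler--Maclaurin corrections, since $1/S_1-1/t$ extends continuously to $[0,1/2]$. Collecting these contributions reproduces $en\log n - C_1 n + O(1)$ with $C_1$ exactly the constant in \eqref{C1}; crucially, this idealized term carries \emph{no} $\log n$ summand.

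Third, the $e\log n$ term must therefore originate entirely from the first-order drift correction $\frac{e}{n}\sum_{k=1}^{\tr{n/2}} \psi(k/n)/S_1(k/n)$. Writing $\psi(t)/S_1(t)=\psi(0)/t+\chi(t)$ with $\chi$ bounded near $0$, the singular part gives $e\psi(0)\cdot\frac1n\sum_k (n/k)=e\psi(0)H_{\tr{n/2}}=e\psi(0)\log n+O(1)$, while the $\chi$-part contributes only $e\int_0^{1/2}\chi(t)\,\dd t+O(1/n)=O(1)$. The theorem's coefficient $e$ then amounts to the single identity $\psi(0^+)=1$; as a consistency check, the exact value $\Delta_n(1)=\tfrac1n(1-1/n)^{n-1}=\tfrac{e^{-1}}{n}(1+\tfrac1{2n}+O(n^{-2}))$ against $e^{-1}S_1(1/n)=\tfrac{e^{-1}}{n}(1+\tfrac3{2n}+O(n^{-2}))$ yields the relative factor $1-1/n+O(n^{-2})$, i.e.\ $\psi(0^+)=1$, as required.

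I expect the main obstacle to be the first step: establishing the first-order expansion of $1/\Delta_n(k)$ uniformly in $k$ with an error that is genuinely $o(1/n)$ relatively (so that it sums to $O(1)$), and in particular taming the region of small $k$, where simultaneously the binomial-to-Poisson approximation is least accurate and the integrand $1/S_1$ is singular; here I would likely treat $k$ bounded via the exact drift values and reserve the expansion for larger $k$. Once the function $\psi$ is identified and $\psi(0^+)=1$ is verified, the Euler--Maclaurin bookkeeping of the second and third steps is routine.
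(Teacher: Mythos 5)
Your overall architecture coincides with the paper's: approximate $\Delta_n(k)$ by $e^{-1}S_1(k/n)$ with a $1/n$ correction, split off the $1/t$ singularity of $1/S_1$, and finish with harmonic sums plus Euler--Maclaurin. Your bookkeeping is also consistent with the paper's: the idealized sum $e\sum_k 1/S_1(k/n)$ indeed produces $en\log n - C_1 n + O(1)$ with no logarithmic term, and the coefficient of $e\log n$ is exactly the $\alpha\to0$ limit of the relative first-order correction (in the paper's notation, $-T_1(\alpha)/S_1(\alpha)\to 3/2$ from the drift expansion, $-1$ from the shift $\Delta_n=\Delta_{n-1}^*\cdot(1-1/n)^n$ evaluating $S_1$ at $k/(n-1)$, and $+1/2$ from $(1-1/n)^{-n}=e(1+\tfrac1{2n}+O(n^{-2}))$, summing to $1$).

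The genuine gap is your first step, and it is not a routine one. You assert, but do not establish, the expansion $1/\Delta_n(k)=\frac{e}{S_1(k/n)}\bigl(1+\psi(k/n)/n+O(n^{-2})\bigr)$ uniformly for $1\le k\le\tr{n/2}$, and you identify $\psi(0^+)=1$ only by a consistency check at the single point $k=1$; a one-point evaluation cannot certify the limit of a function you have not constructed, so the coefficient of $\log n$ is not actually proved. Moreover, your fallback of ``treating $k$ bounded via exact drift values'' does not close the problematic regime: the uniformity is delicate for all $k=o(n)$, not just $k=O(1)$, because the required error is \emph{relative} and $S_1(k/n)\sim k/n$ degenerates there. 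The paper resolves exactly this point by first proving the contour-integral representation $\Delta_n^*(k)=[z^{-1}]\frac1{(1-z)^2}\bigl(1+\frac1{nz}\bigr)^k\bigl(1+\frac zn\bigr)^{n+1-k}$ (Lemma~\ref{lmm}), from which the correction functions $T_1$ and $T_2$ (your $\psi$, essentially $-T_1/S_1$ plus the two normalization shifts) are computed explicitly in closed form involving $S_0$, $S_1$ and Bessel functions, with errors controlled uniformly by integrating on a circle $|z|=r<1$; note that the paper carries the expansion to second order ($T_2/n^2$, with $\frac{S_1T_2-T_1^2}{S_1^3}\sim-\frac{11}{12}\alpha^{-1}$) precisely because a first-order statement with uniform relative error $O(n^{-2})$ down to $k=1$ cannot be verified without controlling the next term. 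Supplying such a representation (or an equivalent uniform two-term expansion with explicit $\psi$ and remainder) is the substance of the proof; without it your argument is a correct plan rather than a proof.
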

Note that if we multiply the left-hand side of \eqref{S1D} by 
$e^{-1/(2n)}$, then the difference with \eqref{eq:exact-hwang} is 
bounded, namely, 
\[
    e^{-1/(2n)}\sum_{k=1}^{\tr{n/2}}\frac1{\Delta_n(k)}
    = en\log n -C_1\,n + \frac{e}2\log n +O(1).
\]

To prove Theorem~\ref{thm:S1D}, we use the techniques of generating functions
and Euler-Maclaurin summation formula, which are conceptually and
methodologically simpler than the asymptotic resolution of the
recurrences used in~\cite{HwangEVCO18}. The following lemma can be
obtained in style similar to Lemma~\ref{lem:delta-bounds}. Since it
is with respect to the normalized $\Delta^*$, we give a
self-contained proof.

\begin{lemma} For $k\in\{0,\dots,n+1\}$, 
\begin{align}\label{ineq-Delta}
    \Lpa{1+\frac1n}^{k-1} \frac kn
    \le \Delta_n^*(k)\le \Lpa{1+\frac1n}^{n} \frac kn.
\end{align}    
\end{lemma}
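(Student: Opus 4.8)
The plan is to work directly from the explicit double-sum representation of $\Delta_n^*(k)$ recorded in Definition~\ref{def:delta-start}, namely
\[
\Delta_n^*(k)=\sum_{\ell=1}^{k}\binom{k}{\ell}n^{-\ell}\sum_{j=0}^{\ell-1}(\ell-j)\binom{n+1-k}{j}n^{-j},
\]
rather than through the coefficient extraction of Lemma~\ref{lmm}. Both inequalities will reduce to the single binomial identity $\ell\binom{k}{\ell}=k\binom{k-1}{\ell-1}$, which together with the binomial theorem yields
\[
\sum_{\ell=1}^{k}\ell\binom{k}{\ell}n^{-\ell}=\frac kn\sum_{m=0}^{k-1}\binom{k-1}{m}n^{-m}=\frac kn\Lpa{1+\frac1n}^{k-1}.
\]
This quantity is exactly the target lower bound, which is the structural hint that one should compare against the ``slope-one'' contribution $\ell-j=\ell$.

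For the lower bound I would discard all contributions with $j\ge1$. Every summand is nonnegative because $\ell-j\ge1$ whenever $0\le j\le\ell-1$, so keeping only the inner term $j=0$ (where $\binom{n+1-k}{0}=1$ and $\ell-j=\ell$) can only decrease the sum. What remains is precisely $\sum_{\ell=1}^{k}\ell\binom{k}{\ell}n^{-\ell}$, which by the identity above equals $\frac kn(1+1/n)^{k-1}$, settling $\Delta_n^*(k)\ge\frac kn(1+1/n)^{k-1}$.

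For the upper bound I would instead over-estimate and then over-extend. First replace $\ell-j$ by $\ell$ in the inner sum, and then let $j$ range over all of $\{0,1,\dots\}$ rather than $\{0,\dots,\ell-1\}$; since $\binom{n+1-k}{j}=0$ for $j>n+1-k$, this completes a binomial expansion and gives $\sum_{j\ge0}\binom{n+1-k}{j}n^{-j}=(1+1/n)^{n+1-k}$. Pulling this factor out leaves $\sum_{\ell=1}^{k}\ell\binom{k}{\ell}n^{-\ell}=\frac kn(1+1/n)^{k-1}$ as before, and multiplying the two powers merges the exponents into $(1+1/n)^{(k-1)+(n+1-k)}=(1+1/n)^{n}$, which is exactly the claimed upper bound; the boundary case $k=0$ holds trivially since $\Delta_n^*(0)=0$. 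I do not expect a genuine obstacle here: the only point requiring a little care is arranging the two estimates so that the residual $\ell$-sums coincide and the exponents telescope to $k-1$ and $n$ respectively, and the inequalities are deliberately chosen (drop the $j\ge1$ terms for the lower bound; bound $\ell-j\le\ell$ and extend the $j$-range for the upper bound) so that this bookkeeping goes through cleanly.
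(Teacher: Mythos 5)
Your proof is correct and is essentially the paper's own argument: the paper also works directly from the double-sum definition, factors out $k/n$ via the same identity $\ell\binom{k}{\ell}=k\binom{k-1}{\ell-1}$, obtains the lower bound by keeping only the $j=0$ terms, and obtains the upper bound by bounding the weight $\ell-j$ and completing the $j$-sum to the full binomial expansion $(1+1/n)^{n+1-k}$. The only difference is the order in which the identity and the two estimates are applied, which is immaterial.
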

\begin{proof} 
By definition
\begin{align*}
    \Delta_n^*(k)
    &= \frac kn\sum_{\ell=0}^{k-1} 
    \binom{k-1}{\ell}n^{-\ell}\sum_{j=0}^{\ell}
    \frac{\ell+1-j}{\ell+1}
    \binom{n+1-k}{j} n^{-j}\\
    &\le \frac kn\sum_{\ell=0}^{k-1} 
    \binom{k-1}{\ell}n^{-\ell}\sum_{j=0}^{\ell}
    \binom{n+1-k}{j} n^{-j}\\
    &\le \frac kn \sum_{\ell=0}^{k-1} 
    \binom{k-1}{\ell}n^{-\ell}
     \sum_{j=0}^{n+1-k} 
    \binom{n+1-k}{j} n^{-j}\\
    &= \Lpa{1+\frac1n}^{n} \frac kn < e\,\frac kn.
\end{align*}
On the other hand, 
\begin{align*}
    \Delta_n^*(k)
    &\ge \frac kn\sum_{\ell=0}^{k-1} 
    \binom{k-1}{\ell}n^{-\ell}
    = \Lpa{1+\frac1n}^{k-1} \frac kn.
\end{align*}  
\end{proof}
Note that \eqref{ineq-Delta} becomes an identity when $k=0$ and 
$k=n+1$.

The crucial lemma we need to prove \eqref{S1D} is given as follows. 
\begin{lemma} Let $\ve>0$. Then for $1\le k\le (1-\ve)n$, 
\begin{align}\label{ST}
    \Delta_n^*(k)
    = S_1(\alpha) + \frac{T_1(\alpha)}{n} 
    + O\lpa{n^{-2}},
\end{align}
where $\alpha=k/n$ and 
\begin{equation}\label{T1}
    \begin{split}
    T_1(\alpha) &= \tfrac12S_1(\alpha)-2\alpha S_0(\alpha)
	-\alpha \,I_0\lpa{2\sqrt{\alpha(1-\alpha)}}\\
	&\qquad -\sqrt{\alpha(1-\alpha)}\,
    I_1\lpa{2\sqrt{\alpha(1-\alpha)}}.
    \end{split}
\end{equation}
Here the $I_j$'s represent the modified Bessel functions. 
\end{lemma}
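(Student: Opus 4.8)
The plan is to work from the coefficient-extraction representation of Lemma~\ref{lmm} rather than from the double-sum definition, because this route produces the modified Bessel functions in $T_1$ essentially for free. Writing $\phi(z)\coloneqq k\log\lpa{1+\frac1{nz}}+(n+1-k)\log\lpa{1+\frac zn}$ and $\alpha=k/n$, I would first expand $\phi$ in powers of $1/n$ on a fixed circle $|z|=\rho$ with $0<\rho<1$. Collecting the logarithmic series term by term gives $\phi(z)=\phi_0(z)+\frac1n\phi_1(z)+r_n(z)$ with $\phi_0(z)=\frac{\alpha}z+(1-\alpha)z$ and $\phi_1(z)=z-\frac{\alpha}{2z^2}-\frac{(1-\alpha)z^2}2$, where the remainder $r_n(z)$ comes entirely from the cubic-and-higher logarithmic terms. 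Since $k=\alpha n$ and $n+1-k=(1-\alpha)n+1$ with $\alpha\le1-\ve$, every coefficient in these series is $O(1)$ and, on the fixed contour, $r_n(z)=O(n^{-2})$ uniformly; exponentiating then yields $e^{\phi(z)}=e^{\phi_0(z)}\lpa{1+\frac{\phi_1(z)}n+O(n^{-2})}$ uniformly on $|z|=\rho$.

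Next I would extract coefficients. Because $\Delta_n^*(k)=[z^{-1}]\frac{e^{\phi(z)}}{(1-z)^2}$ corresponds to integrating over $|z|=\rho$ in the annulus $0<|z|<1$, and $\frac1{(1-z)^2}$ as well as $e^{\phi_0(z)}$ are bounded there, the uniform $O(n^{-2})$ survives the contour integral. Writing $E(z)\coloneqq e^{\phi_0(z)}=e^{\alpha/z+(1-\alpha)z}$, this gives $\Delta_n^*(k)=[z^{-1}]\frac{E(z)}{(1-z)^2}+\frac1n[z^{-1}]\frac{E(z)\phi_1(z)}{(1-z)^2}+O(n^{-2})$. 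A direct Cauchy-product check identifies the leading term as $S_1(\alpha)$, and, with $1/(1-z)$ in place of $1/(1-z)^2$, the companion identity $S_0(\alpha)=[z^{-1}]\frac{E}{1-z}$; this already establishes the claimed main term of~\eqref{ST}.

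The decisive step is evaluating the $1/n$-coefficient $T_1(\alpha)=[z^{-1}]\frac{E}{(1-z)^2}\lpa{z-\frac\alpha{2z^2}-\frac{(1-\alpha)z^2}2}$. Here I would use the Bessel generating function $E(z)=\sum_{m}I_m\lpa{2\sqrt{\alpha(1-\alpha)}}\lpa{\frac{1-\alpha}{\alpha}}^{m/2}z^m$, so that each $[z^m]E$ is a single Bessel value, together with the partial-fraction identities $\frac{z}{(1-z)^2}=\frac1{(1-z)^2}-\frac1{1-z}$, $\frac{z^{-2}}{(1-z)^2}=\frac1{z^2}+\frac2z+\frac1{(1-z)^2}+\frac2{1-z}$, and $\frac{z^2}{(1-z)^2}=\frac1{(1-z)^2}-\frac2{1-z}+1$. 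Each of the three pieces of $\phi_1$ then reduces to a combination of $S_0$, $S_1$ and the Bessel coefficients $[z^{-1}]E,[z^0]E,[z^1]E$; summing them, the $S_1$-coefficients collapse to $\tfrac12$, the $S_0$-coefficients to $-2\alpha$, the single $z^0$-term gives $-\alpha I_0$, and the two half-weight contributions coming from $[z^{\pm1}]E$ combine into $-\sqrt{\alpha(1-\alpha)}\,I_1$, yielding exactly~\eqref{T1}.

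I expect the main obstacle to be the rigorous and uniform control of the remainder $r_n(z)=O(n^{-2})$ across the whole admissible range $1\le k\le(1-\ve)n$: one must verify that the tails of both logarithmic series are uniformly summable on the fixed contour and that the implied constant depends only on $\rho$ and $\ve$. This is precisely where the hypothesis $\alpha\le1-\ve$ enters, since it keeps $n+1-k$ of order $n$ and thus keeps all expansion coefficients bounded. By contrast, the final identification of $T_1$ is lengthy but entirely mechanical once the partial-fraction and Bessel-coefficient identities are in hand.
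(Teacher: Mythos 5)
Your proposal is correct and follows essentially the same route as the paper: both start from the $[z^{-1}]$-representation of Lemma~\ref{lmm}, expand the logarithms in powers of $1/n$ on a fixed circle $|z|=\rho<1$ (your $\phi_1(z)=z-\frac{\alpha}{2z^2}-\frac{(1-\alpha)z^2}{2}$ is exactly the paper's correction factor $\frac{2z^3-\alpha-(1-\alpha)z^4}{2z^2}$), control the remainder uniformly via the Cauchy integral, and identify $S_1$, $S_0$ and the Bessel terms through partial fractions. The only cosmetic differences are that the paper keeps the extra factor $\bigl(1+\frac zn\bigr)$ separate, decomposes the combined rational function rather than the three monomials individually, and additionally verifies the $k=o(n)$ regime by a direct series expansion as a consistency check, none of which changes the substance of the argument.
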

It is possible to extend further the range in $k$, but we do not need 
it here. 
\begin{proof} 
First for small $k$, we have, by Definition~\ref{def:delta-start} and 
direct expansion, 
\begin{align}\label{Dnk-small}
    \Delta_n^*(k)
	= \frac{k}{n}+\frac{3k(k-1)}{2n^2}
    % +\frac{k(k-1)(5k-16)}{12n^3}
    %     -\frac{k(k-1)(k-2)(23k+87)}{144n^4}
    +O\lpa{k^3n^{-3}},
\end{align}
which holds uniformly for $1\le k=o(n)$. A simple, readily codable 
procedure to derive this is as follows. Assuming $k$ to be fixed and 
expanding 
\[
    \Bigl(1+\frac1{nz}\Bigr)^k
    \Bigl(1+\frac zn\Bigr)^{n+1-k}
    = e^z + \frac{e^z}{2n} \left(\frac{2k}z-2(k-1)z-z^2\right)
    +\cdots,
    % +O\left(n^{-2}|e^z|(1+k^2|z|^2)\right),
\]
for large $n$. Then multiplying both sides by $(1-z)^{-2}$ and
computing the coefficient of $z^{-1}$ term by term (corresponding to
the residue of the integrand in the Cauchy integral), giving
\begin{align*}
    [z^{-1}]\frac{e^z}{(1-z)^2} &= 0,\\
    [z^{-1}]\frac{e^z}{2n(1-z)^2} 
    \left(\frac{2k}z-2(k-1)z-z^2\right)
    &= \frac kn,\\
    [z^{-1}]\frac{e^z}{2n(1-z)^2} 
    \left(\frac{k(k-1)}{z^2}-2k(k-1)z\right)
    &= \frac {3k(k-1)}{2n^2},\\
    &\cdots
\end{align*}
On the other hand, by the Taylor expansions
\begin{equation}\label{ST-z}
    S_1(z) = z +\tfrac32z^2+\tfrac5{12}z^3
    % -\tfrac{23}{144}z^4
    +\cdots
    \text{ and }
    T_1(z) = -\tfrac32z-\tfrac74z^2-\tfrac18z^3
    % +\tfrac{29}{96}z^4
    +\cdots,
\end{equation}
we see that 
\[
    S_1(\alpha) + \frac{T_1(\alpha)}{n} 
    = \alpha +\frac32\,\alpha^2-\frac{3\alpha}{2n}
    +O\lpa{\alpha^3+\alpha^2\,n^{-1}},
\]
consistent with \eqref{Dnk-small}. This proves \eqref{ST} when 
$k=o(n)$. 

Now consider larger values of $k$ and write $k=\alpha n$, where
$\alpha\in[\ve,1-\ve]$. Then
\begin{align*}
    &\alpha n \log\Lpa{1+\frac1{nz}}
    +(1-\alpha) n\log\Lpa{1+\frac zn} \\
    % \\ &\qquad
    &\qquad\eqqcolon \frac\alpha z+(1-\alpha)z
    -\frac{\alpha+(1-\alpha)z^4}{2nz^2}
    +E_0(z),
\end{align*}
where
\begin{align*}
    E_0(z) &= \sum_{\ell\ge2}\frac{(-1)^{\ell}}{n^\ell}
    \left(\alpha\,\frac{z^{-\ell-1}}{\ell+1}
    +(1-\alpha)\frac{z^{\ell+1}}{\ell+1} \right) \\
    &= O\left( \frac{\alpha |z|^{-3}+ 
    (1-\alpha)|z|^3}{n^2}\right).
\end{align*}
By the inequality
\[
    |e^z-1| = \left|z\int_0^1 e^{tz}\dd t\right|
    \le |z|e^{|z|}\qquad(z\in\mathbb{C}),
\]
we have 
\begin{align*}
	&\left|\Bigl(1+\frac1{nz}\Bigr)^k
	\Bigl(1+\frac zn\Bigr)^{n-k}
    -e^{\frac\alpha z+(1-\alpha)z
    -\frac{\alpha z^{-2}+(1-\alpha)z^2}{2n}}\right|\\
    &\qquad \le |E_0(z)| e^{|E_0(z)|}
    \Bigl|e^{\frac\alpha z+(1-\alpha)z
    -\frac{\alpha z^{-2}+(1-\alpha)z^2}{2n}}\Bigr|.
\end{align*}
The error is then estimated by using the Cauchy integral 
representation 
\begin{align*}
    &[z^{-1}]\frac{1}{(1-z)^2}
    \Lpa{1+\frac1{nz}}^k\Lpa{1+\frac zn}^{n+1-k}\\
    &\qquad= \frac{1}{2\pi i}\oint_{|z|=r}
    \frac1{(1-z)^2}\Bigl(1+\frac1{nz}\Bigr)^k
	\Bigl(1+\frac zn\Bigr)^{n+1-k}\dd z,
\end{align*}
so that ($0<r<1$)
\begin{align*}
    &\left|\oint_{|z|=r}
    \frac{|E_0(z)| e^{|E_0(z)|}}{|1-z|^2}
    \Bigl|e^{\frac\alpha z+(1-\alpha)z
    -\frac{\alpha z^{-2}+(1-\alpha)z^2}{2n}}
    \Bigl(1+\frac zn\Bigr)\Bigr| \dd z\right|\\
    &=O\left(n^{-2}\int_{-\pi}^{\pi}
    \frac{\alpha r^{-2} + (1-\alpha)r^4}{(1-r)^2}\,
    e^{\frac \alpha r\cos t+(1-\alpha)r\cos t}\right)\dd t\\
    &=O\lpa{n^{-2}},
\end{align*}
since $r$ is away from $1$. Thus 
\begin{align*}
    \Delta_n^*(k)
    &= [z^{-1}]\frac{1+\frac zn}{(1-z)^2}
    \,e^{\frac\alpha z+(1-\alpha)z
    -\frac{\alpha z^{-2}+(1-\alpha)z^2}{2n}}
    +O\lpa{n^{-2}}.
\end{align*}
By the same argument, we have
\begin{align*}
    \Delta_n^*(k)
    &= [z^{-1}]\frac{e^{\frac\alpha z+(1-\alpha)z}}{(1-z)^2}
	\left(1-\frac{\alpha-2z^3+(1-\alpha)z^4}{2nz^2}
	\right)+O\lpa{n^{-2}}.
\end{align*}
The lemma will then follow from the relations
\begin{align}\label{S1a}
    S_1(\alpha) = [z^{-1}]
    \frac{e^{\frac\alpha z+(1-\alpha)z}}{(1-z)^2},
\end{align}
and 
\begin{align}\label{T1a}
	T_1(\alpha) &= [z^{-1}]
	\frac{e^{\frac\alpha z+(1-\alpha)z}}
	{(1-z)^2}\cdot\frac{-\alpha+2z^3-(1-\alpha)z^4}{2z^2}.
\end{align}
To prove \eqref{S1a}, we expand the factor $e^{\frac\alpha z}$ and
take the coefficient term by term, yielding
\begin{align*}
    [z^{-1}]\frac{e^{\frac\alpha z+(1-\alpha)z}}{(1-z)^2}
    &= \sum_{\ell\ge0}\frac{\alpha^\ell}{\ell!}
    [z^{\ell-1}]\frac{e^{(1-\alpha)z}}{(1-z)^2}\\
    &= \sum_{\ell\ge0}\frac{\alpha^\ell}{\ell!}
    \sum_{j=0}^{\ell-1}(\ell-j)\frac{(1-\alpha)^j}{j!}
    = S_1(\alpha).
\end{align*}
Similarly,
\begin{align}\label{S0a}
    S_0(\alpha) = [z^{-1}]
    \frac{e^{\frac\alpha z+(1-\alpha)z}}{1-z},
\end{align}
and by the decomposition,
\[
    \frac{-\alpha+2z^3-(1-\alpha)z^4}{z^2(1-z)^2}
	= \frac1{(1-z)^2}-\frac{4\alpha}{1-z}
	-(1-\alpha)-\frac{2\alpha}z-\frac{\alpha}{z^2},
\]
we obtain 
\begin{align*}
    &[z^{-1}]
    \frac{e^{\frac\alpha z+(1-\alpha)z}}
    {(1-z)^2}\cdot\frac{-\alpha+2z^3-(1-\alpha)z^4}{2z^2} \\
    &\quad= \frac12
    \sum_{\ell\ge0}\frac{\alpha^\ell}{\ell!}
	\sum_{j=0}^{\ell-1}\frac{(1-\alpha)^j}{j!}
	(\ell-j -4\alpha)\\
	&\qquad-(1-\alpha)\sum_{\ell\ge1}\frac{\alpha^\ell}{\ell!}
	\cdot\frac{(1-\alpha)^{\ell-1}}{(\ell-1)!}
	-2\alpha \sum_{\ell\ge0}\frac{\alpha^\ell}{\ell!}
	\cdot\frac{(1-\alpha)^{\ell}}{\ell!}\\
	&\qquad-\alpha \sum_{\ell\ge0}\frac{\alpha^\ell}{\ell!}
	\cdot\frac{(1-\alpha)^{\ell+1}}{(\ell+1)!},
\end{align*}
which equals $T_1(\alpha)$ by properly grouping the terms. 
This proves the lemma. 
\end{proof}

As we will see below, finer calculations give
\begin{align}\label{ST2}
    \Delta_n^*(k)
    = S_1(\alpha) + \frac{T_1(\alpha)}{n} 
    + \frac{T_2(\alpha)}{n^2} 
    + O\lpa{n^{-3}},
\end{align}
where
\begin{equation}\label{T2a}
    \begin{split}
        T_2(\alpha)
        &:= -\frac{S_1(\alpha)}{24}+\alpha S_0(\alpha)
    	+\frac{1+6\alpha}{12} \,I_0\lpa{2\sqrt{\alpha(1-\alpha)}}\\
    	&\qquad-\frac{1-10\alpha+4\alpha^2}{12\sqrt{\alpha(1-\alpha)}}
        \,I_1\lpa{2\sqrt{\alpha(1-\alpha)}}.
    \end{split}
\end{equation}
In particular, when $\alpha\to0$, we have $T_2(\alpha)
=\frac43\alpha+\frac{215}{144}\alpha^2+\frac{13}{192}\alpha^3+\cdots$.

To obtain formula \eqref{T2a} for $T_2(\alpha)$ we begin with the 
expression
\[
	T_2(\alpha) = [z^{-1}]
	e^{\frac\alpha z+(1-\alpha)z}\cdot 
	\frac{W_\alpha(z)}{(1-z)^2},
\]
where
\begin{footnotesize}
\[
    W_\alpha(z) 
    := \frac{3\alpha^2+8\alpha z-12\alpha z^3
    +6\alpha(1-\alpha)z^4-4(1-\alpha)z^7+3(1-2\alpha)z^8}
    {24 z^4}.
\]
\end{footnotesize}
By the decomposition 
\begin{footnotesize}\def\arraystretch{1}
    \begin{align*}
        \frac{W_\alpha(z)}{(1-z)^2}
        &= \left(\begin{array}{l}
            -\frac1{24}{(1-z)^2}\\
            +\frac{\alpha}{1-z}
        \end{array}\right)
        +\left(
        \begin{array}{l}
            \frac{\alpha^2}8z^{-4} \\
            +\frac{\alpha(4+3\alpha)}{12}z^{-3}\\
            +\frac{\alpha(16+9\alpha)}{24}z^{-2}\\
            +\frac{\alpha(1-\alpha)}2z^{-1}\\
        \end{array}\right)
        +\left(\begin{array}{l}
            \frac{(1-\alpha)(1-9\alpha)}{24}\\
            +\frac{(1-\alpha)(1-3\alpha)}{12}z\\
            +\frac{(1-\alpha)^2}8z^2
        \end{array}\right) ,   
    \end{align*}
\end{footnotesize}
we then derive \eqref{T2a} by a term-by-term translation using the 
relations \eqref{S1a}, \eqref{S0a} and 
\begin{align*}
    [z^{-1}] z^m e^{\frac\alpha z+(1-\alpha)z}
    &= \sum_{\ell\ge \max\{0,-m+1\}}
    \frac{\alpha^\ell(1-\alpha)^{m+\ell-1}}
    {\ell!(m+\ell-1)!} \\
    &= \left(\frac{1-\alpha}\alpha\right)^{(m-1)/2}
    I_{m-1}\lpa{2\sqrt{\alpha(1-\alpha)}},
\end{align*}
for $m\in\mathbb{Z}$.

\paragraph{Proof of Theorem~\ref{thm:S1D}.}
Substituting the expansion \eqref{ST2} into the partial sum 
\[
    Q_{\tr{n/2}} = \sum_{k=1}^{\tr{n/2}}
	\frac1{\Delta_n^*(k)},
\]
and using the expansion
\begin{align*}
    &\frac1{S_1(\alpha)+\frac{T_1(\alpha)}{n} 
    + \frac{T_2(\alpha)}{n^2} 
    + O\lpa{n^{-3}}} \\
    &\qquad= \frac1{S_1(\alpha)}
    -\frac{T_1(\alpha)}{nS_1(\alpha)^2}
    -\frac{S_1(\alpha)T_2(\alpha)-T_1(\alpha)^2}
    {n^2S_1(\alpha)^3}+O\lpa{\alpha^{-2}n^{-3}}
\end{align*}
we obtain 
\begin{align*}
    Q_{\tr{n/2}} &= \sum_{k=1}^{\tr{n/2}}
	\frac1{S_1\lpa{\frac kn}}
    -\frac1{n}\sum_{k=1}^{\tr{n/2}}
	\frac{T_1\lpa{\frac kn}}{S_1\lpa{\frac kn}^2}
    + E_1(n),
\end{align*}
where 
\begin{align*}
    E_1(n) &= -\frac1{n^2} \sum_{k=1}^{\tr{n/2}}
    \frac{S_1\lpa{\frac kn}T_2\lpa{\frac kn}
    -T_1\lpa{\frac kn}^2}{S_1\lpa{\frac kn}^3}\\
    &\qquad\qquad\qquad    +O\left(n^{-3}\sum_{k=1}^{\tr{n/2}}\frac{n^2}{k^2}\right).
\end{align*}
By the local expansion 
\[
    \frac{S_1(\alpha)T_2(\alpha)-T_1(\alpha)^2}
    {S_1(\alpha)^3} = -\frac{11}{12}\, \alpha^{-1}
    +\frac{341}{144}+\cdots,
\]
we deduce that 
\[
    E_1(n) = O\left(n^{-1}\sum_{k=1}^{\tr{n/2}}k^{-1}+
    n^{-1}\right)
    = O\lpa{n^{-1}\log n}. 
\]
On the other hand, since most contribution to the sums come from 
terms with small $k$, we deduce, by using the expansion 
\[
    \frac{T_1(\alpha)}{S_1(\alpha)^2}
    = -\frac{3}{2\alpha} +\frac{11}4 -\frac{15}4\,\alpha
    +\cdots,
\]
and the boundedness of $\frac{\alpha T_1(\alpha)}{S_1(\alpha)^2}$ on 
the unit interval, that 
\[
    -\frac1{n}\sum_{k=1}^{\tr{n/2}}
	\frac{T_1\lpa{\frac kn}}{S_1\lpa{\frac kn}^2}
    = \frac32\,H_{\tr{n/2}} +O(1)
    = \frac32\,\log n +O(1).
\]
Define
\[
    R(z) := \frac1{S_1(z)}-\frac1z,
\]
which is bounded in the unit interval. We have, by \eqref{ST-z},
\[
    Q_{\tr{n/2}} = nH_{\tr{n/2}}+\sum_{k=1}^{\tr{n/2}}R\Lpa{\frac kn}
    +\frac32\,H_{\tr{n/2}}+O(1).
\]
In view of the bounded derivative of $R$ in the unit interval, 
we then deduce, by a standard application of the Euler-Maclaurin 
summation formula (approximating the sum by an integral), that 
\[
    Q_{\tr{n/2}} = n\log n + C_0\, n + \frac32\log n + O(1), 
\]
where
\[
    C_0 := \gamma-\log 2 +\int_0^{\frac12}
    \left(\frac1{S_1(t)}-\frac1t\right)\dd t
    \approx -0.69622\,72155\dots
\]
By the relation $\Delta_n(k) = \Delta_{n-1}^*(k)\lpa{1-\frac1n}^n$, 
we then deduce \eqref{S1D}, proving the theorem. \qed

See Figure~\ref{fig:differences} for the graphical rendering of 
the various approximations derived.

\renewcommand{\arraystretch}{3.0}

\begin{figure}[!ht]
\begin{tiny}
\begin{center}   
\begin{tabular}{ccc}
$\Delta_n^*(k)-S_1(\alpha)$ & 
$\Delta_n^*(k)-S_1(\alpha)-\frac{T_1(\alpha)}{n}$ & 
$\Delta_n^*(k)-S_1(\alpha)-\frac{T_1(\alpha)}{n}
-\frac{T_2(\alpha)}{n^2}$ \\
% & $\frac1{\Delta_n^*(k)}-\frac1{S_1(\alpha)}$\\
\includegraphics[height=2.2cm]{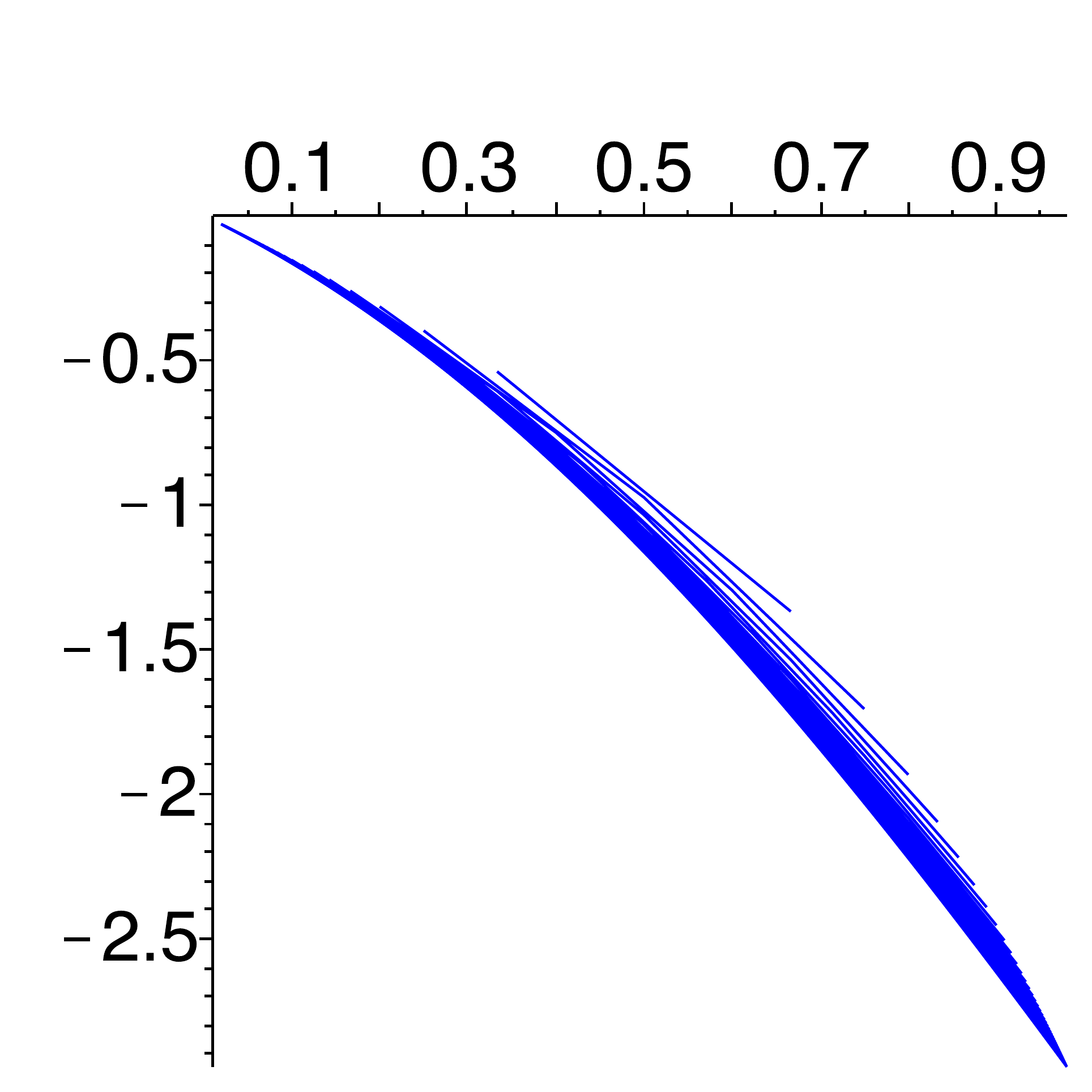} &
\includegraphics[height=2.2cm]{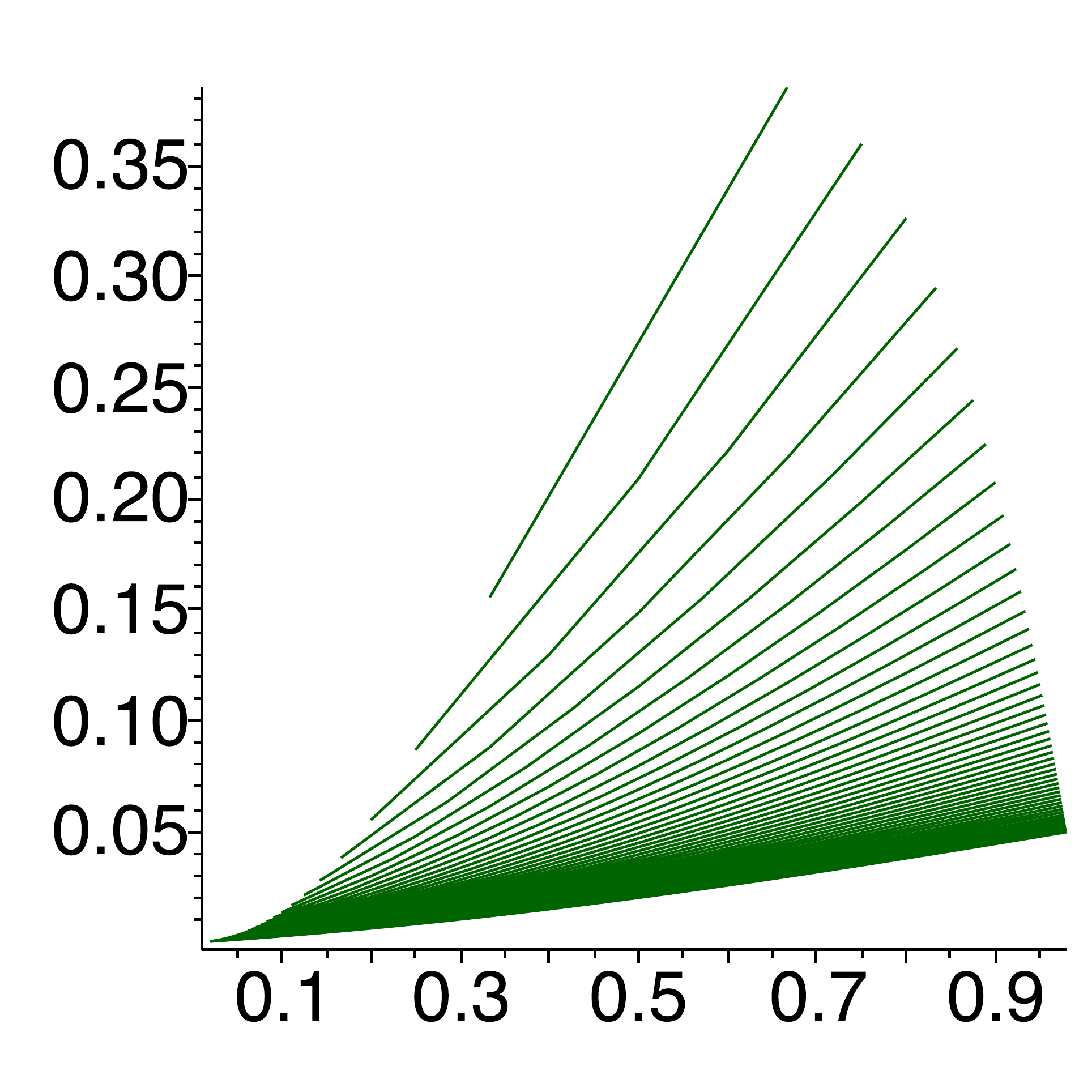} &
\includegraphics[height=2.2cm]{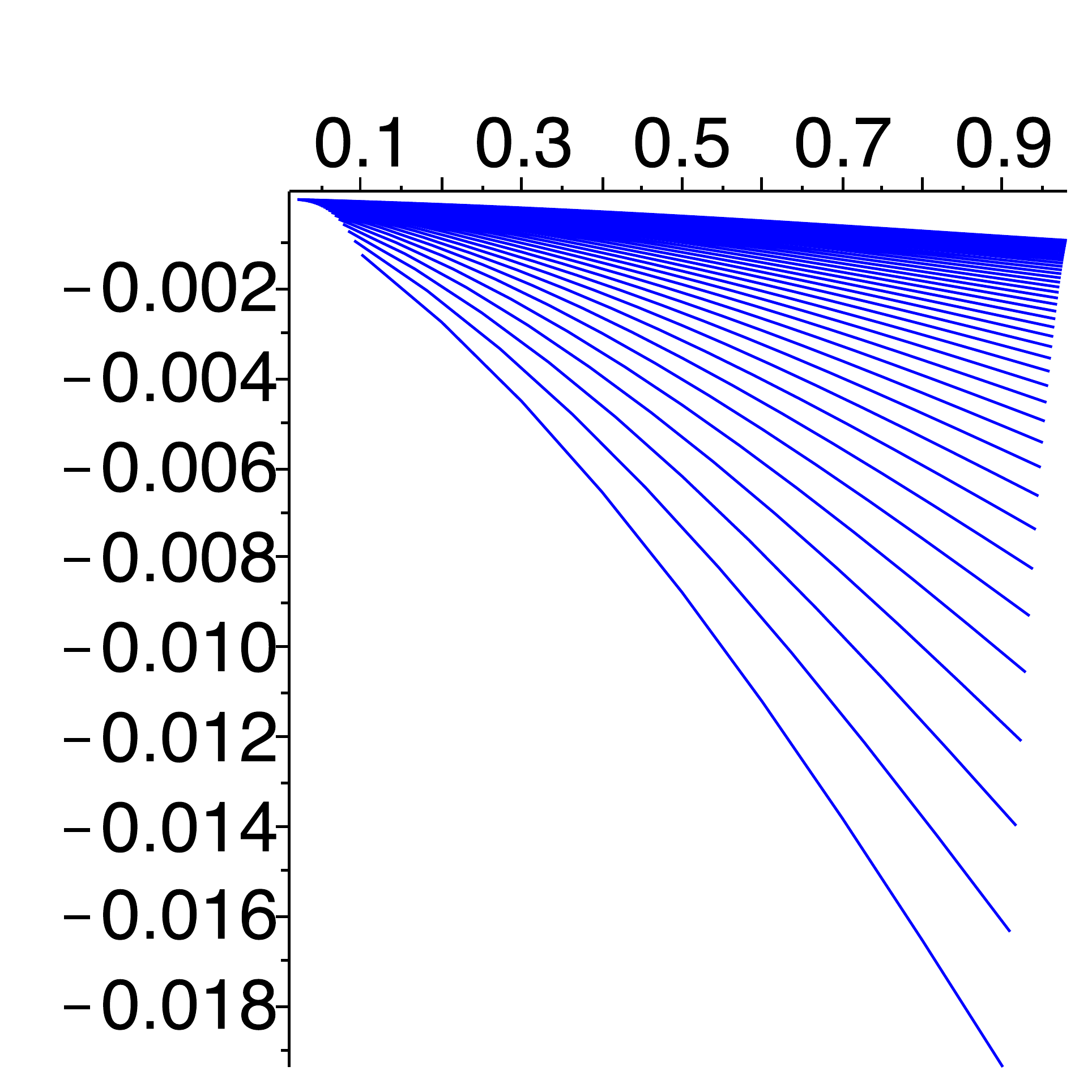} \\
% & \includegraphics[height=3cm]{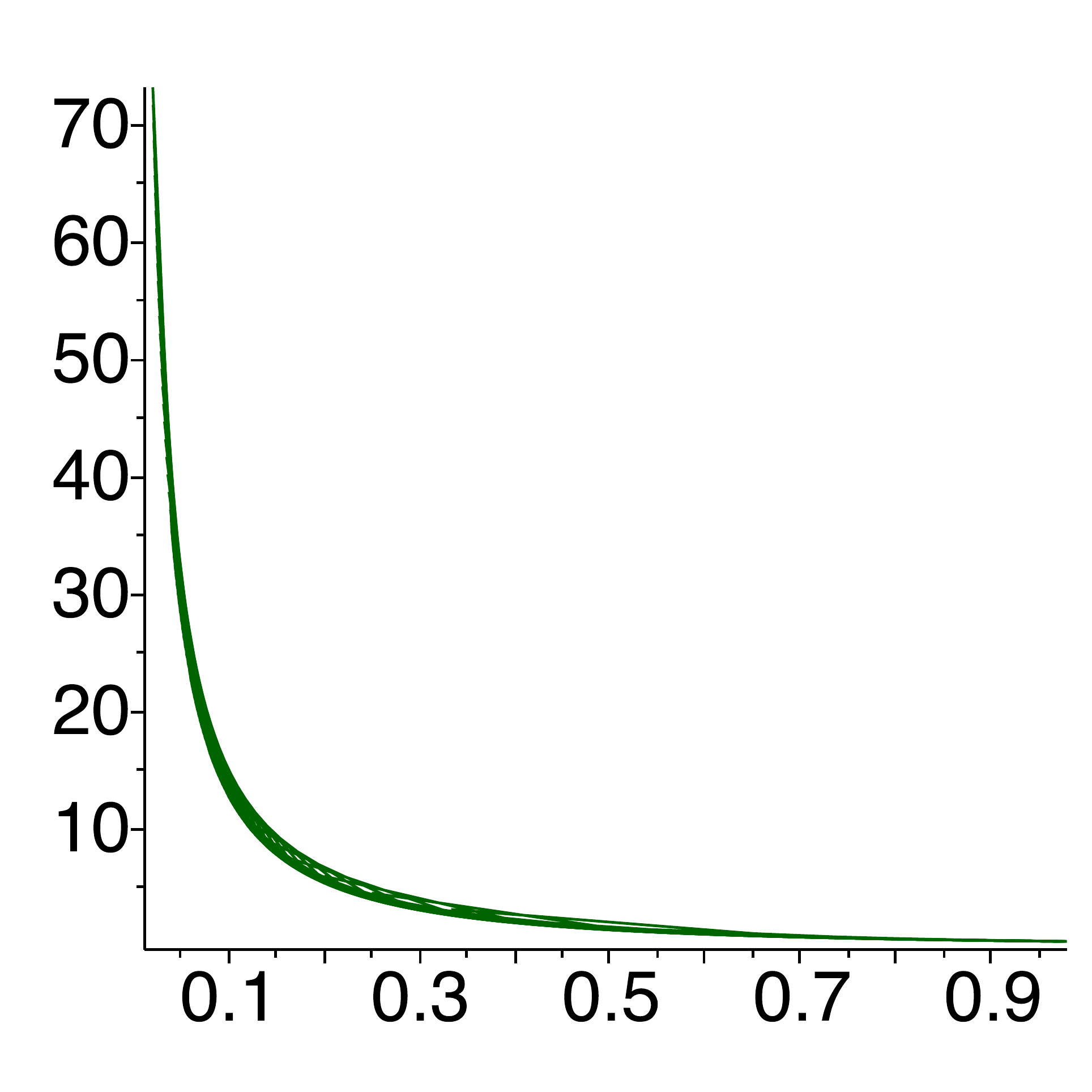} \\
$\frac1{\Delta_n^*(k)}-\frac1{S_1(\alpha)}$ & 
% $\frac1{\Delta_n^*(k)}-\frac1{S_1(\alpha)+\frac{T_1(\alpha)}n}$ &
% $\frac1{\Delta_n^*(k)}-\frac1{S_1(\alpha)+\frac{T_1(\alpha)}n
% +\frac{T_2(\alpha)}{n^2}}$ &
$\frac1{\Delta_n^*(k)}-\frac1{S_1(\alpha)}+\frac{T_1(\alpha)}
{nS_1(\alpha)^2}$ & 
$\begin{array}{l}
\frac1{\Delta_n^*(k)}-\frac1{S_1(\alpha)}
+\frac{T_1(\alpha)}{nS_1(\alpha)^2}\\+
\frac{T_1(\alpha)^2-T_2(\alpha)S_1(\alpha)}{n^2S_1(\alpha)^2}
\end{array}$\\
\includegraphics[height=2.2cm]{DS-4} &
\includegraphics[height=2.2cm]{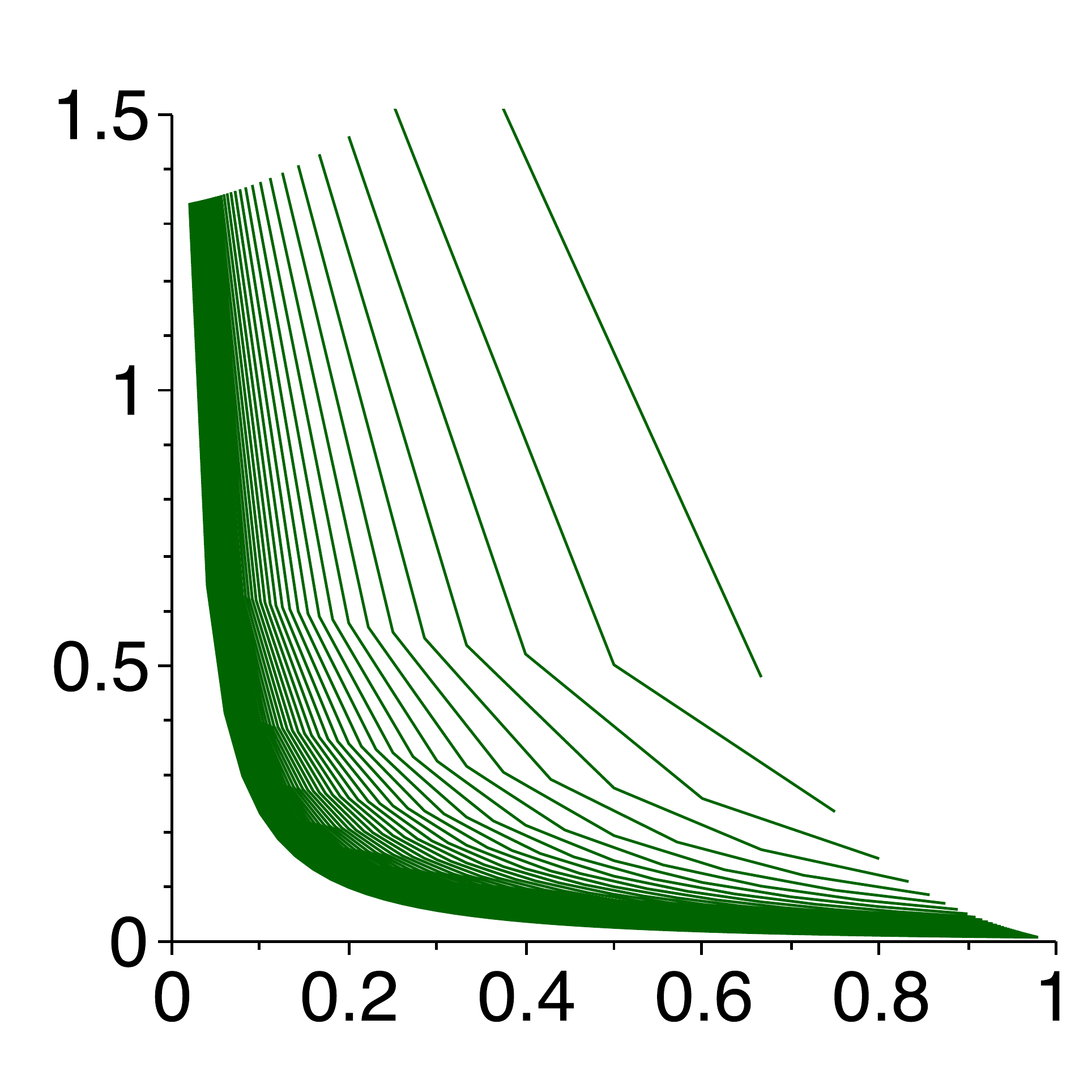} &
\includegraphics[height=2.2cm]{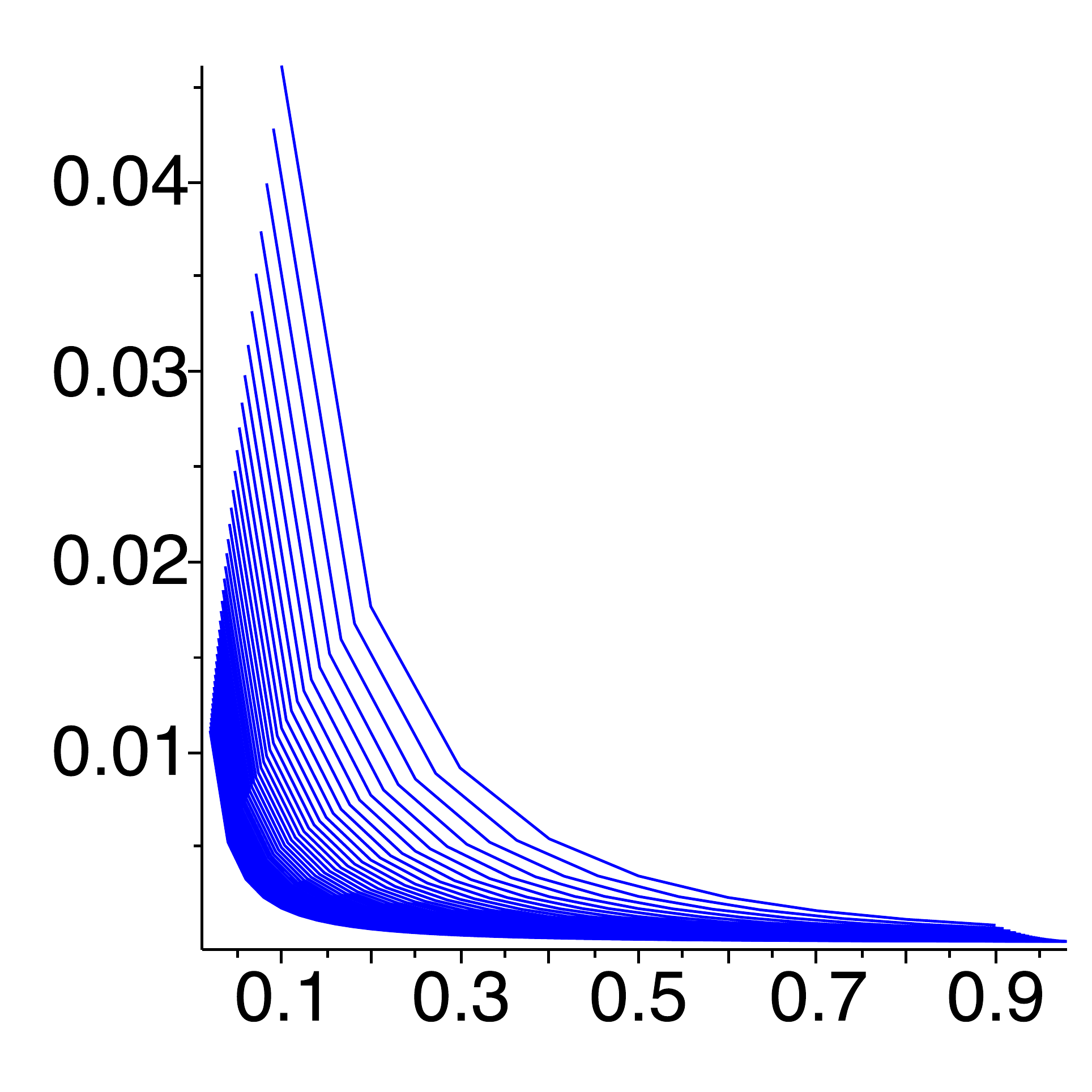}     
\end{tabular}
\end{center}
\end{tiny}
\vspace*{-.4cm}
\caption{Differences between $\Delta_n^*(k),\frac1{\Delta_n^*(k)}$ 
and their asymptotic approximations for $n=2,\dots,50$ (in increasing 
order of the density of the curves) and $k=1,\dots,n$ (normalized in 
the unit interval).}
\end{figure} 

\begin{figure}[!ht]
\begin{tiny}
\begin{center}   
\begin{tabular}{cc}
$\sum_{1\le k\le \tr{\frac n2}}\frac1{\Delta_n(k)}-\mathbf{E}(X_n)$ & 
$\sum_{1\le k\le \tr{\frac n2}}\frac1{\Delta_n(k)}-\mathbf{E}(X_n)
-\frac{e}2\log n$ \\
\includegraphics[height=2.6cm]{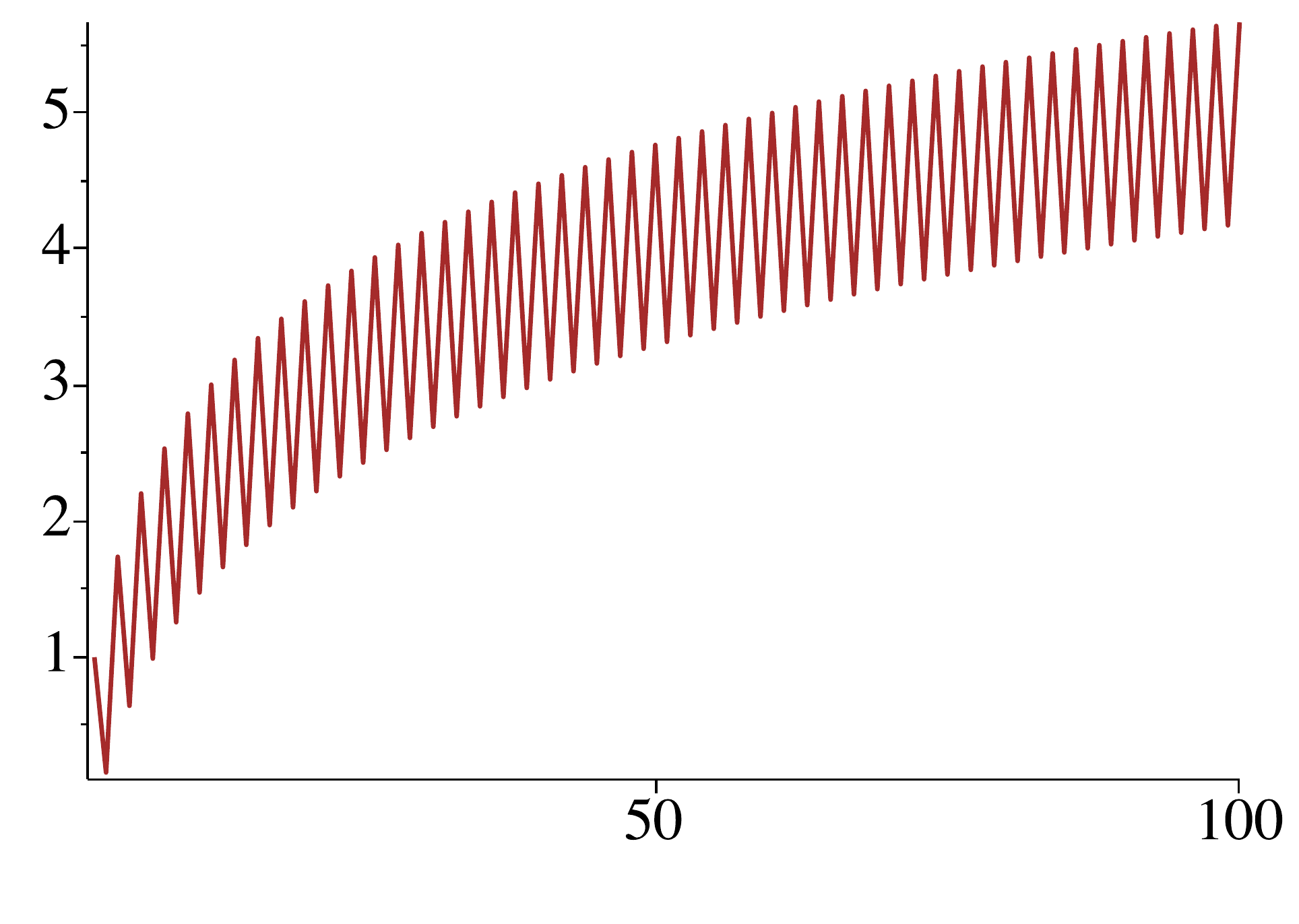} &
\includegraphics[height=2.6cm]{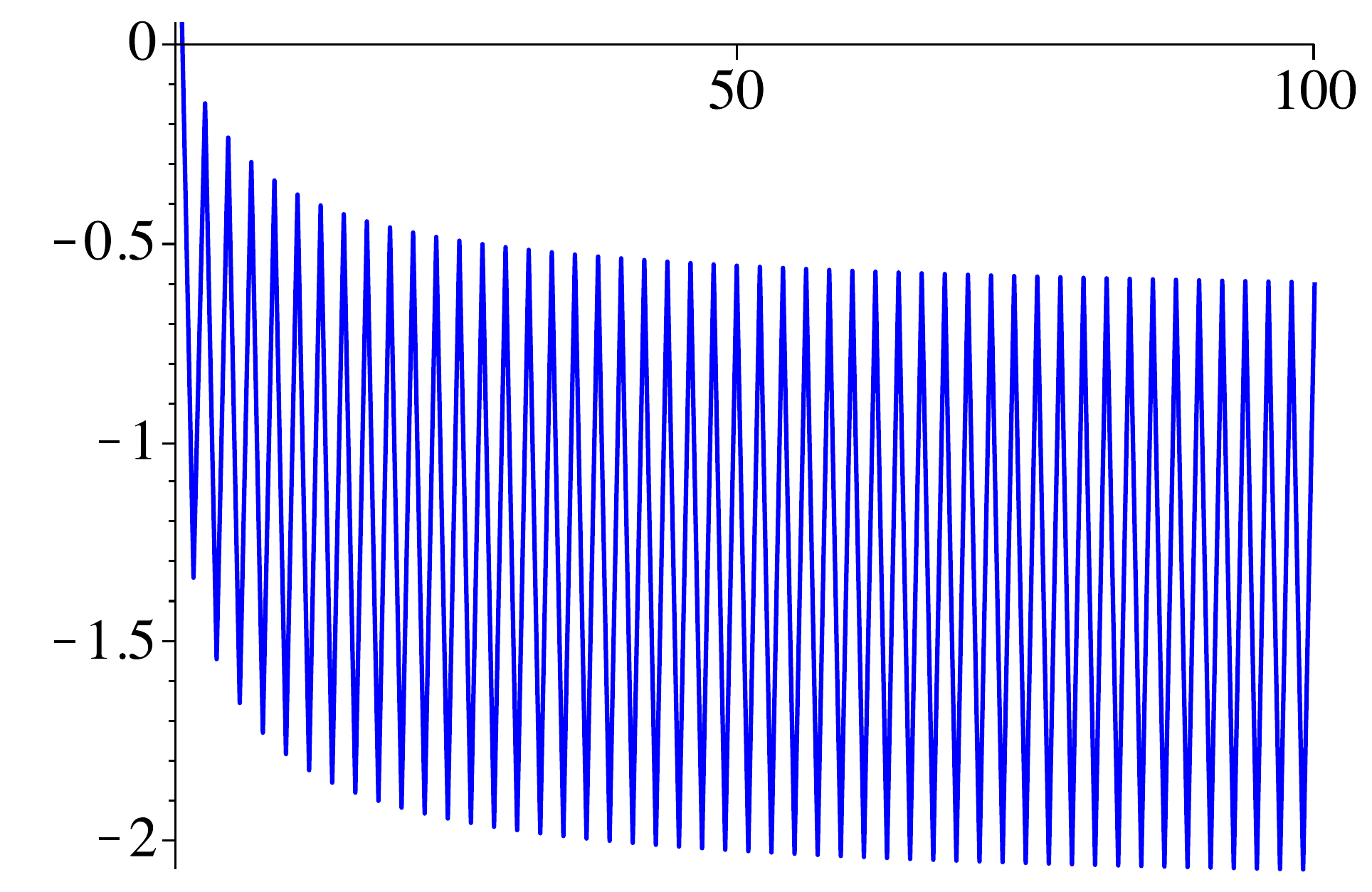}    
\end{tabular}
\end{center}
\end{tiny}
\vspace*{-.4cm}
\caption{Differences between the exact expected runtime and 
$\sum_{1\le k\le \tr{\frac n2}}\frac1{\Delta_n(k)}$ without (left) 
and with (right) the correction term $\frac e2\log n$.}
\label{fig:differences}
\end{figure}

\section*{Conclusions}
We have revisited drift analysis for the fundamental problem of bounding the 
expected runtime of the \ea on the \om problem. Using novel drift theorems 
involving error bounds, we have bounded the expected runtime when starting 
from $\tr{n/2}$ ones, up to additive terms of logarithmic order; more precisely we have
\[
\sum_{k=1}^{\tr{n/2}}\frac{1}{\Delta(k)} - \expect{T\mid X_0=\tr{n/2}} \in  [c_1\log n,  c_2\log n]
\]
for explicitly computed constants $c_1,c_2>0$. This for the first time gives an absolute error bound 
for the expected runtime. Then by standard asymptotic methods, we have 
found that $\sum_{k=1}^{\tr{n/2}}\frac{1}{\Delta(k)}$ overestimates the exact expected runtime 
by a term ${(e/2)\log n+O(1)}$.

\section*{Acknowledgements}
Partially supported by an Investigator Award from Academia Sinica 
under the Grant AS-IA-104-M03.
%\bibliographystyle{plain}
%\bibliography{exact-onemax}

\end{document}